\documentclass{article}

\usepackage[preprint]{corl_2026} 

\usepackage[table]{xcolor}
\usepackage{float}
\usepackage{amsmath}
\usepackage{amssymb}
\usepackage{bbm}
\usepackage{graphicx}
\usepackage{booktabs}
\usepackage{makecell}
\usepackage{caption}
\usepackage{wrapfig}
\usepackage{multirow}
\usepackage{bm}
\usepackage{enumitem}
\usepackage{ragged2e}
\usepackage{pifont}
\newcounter{algorithm}
\renewcommand{\thealgorithm}{\arabic{algorithm}}
\usepackage{amsthm}

\newtheorem{theorem}{Theorem}[section]
\newtheorem{proposition}[theorem]{Proposition}
\newtheorem{lemma}[theorem]{Lemma}
\newtheorem{corollary}[theorem]{Corollary}

\title{SSR: Scaling Surefooted and Symmetric Humanoid Traversal to the Open World}

\author{
Ruiqi Yu$^{1}$\thanks{Equal contribution} \quad 
Yiwen Wang$^{1}$\footnotemark[1] \quad 
Yuan Hao$^{1}$ \quad 
Jun Wu$^{1}$ \quad 
Qiuguo Zhu$^{1}$\thanks{Corresponding author}
\vspace{0.5mm}
\\
$^1$Zhejiang University\\
}

\begin{document}
\maketitle

\vspace{-25pt}
\begin{figure}[!htbp]
  \centering
  \includegraphics[width=0.99\linewidth]{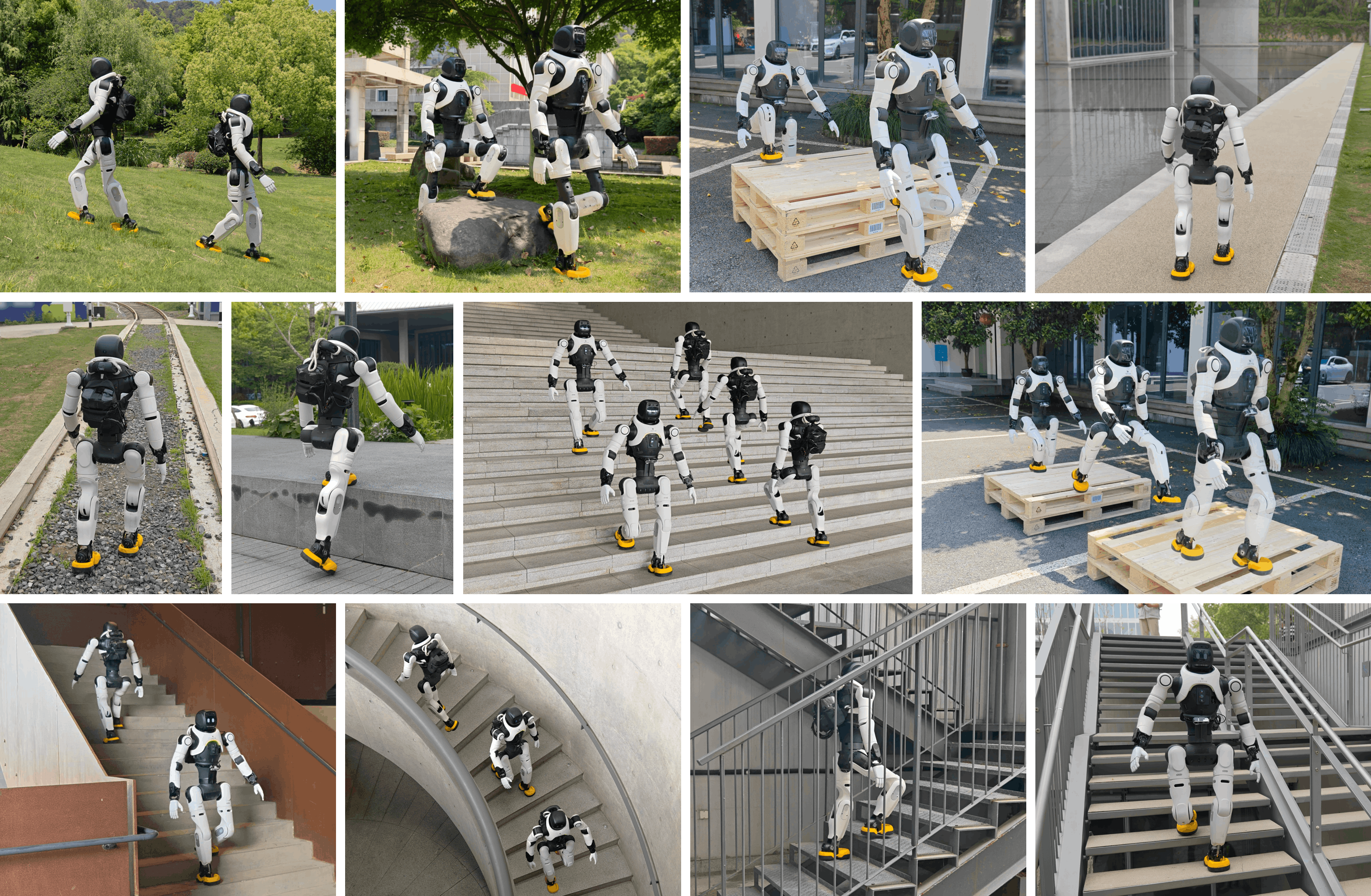}
  \vspace{-3pt}
  \captionsetup{font=footnotesize}
  \caption{SSR scales surefooted and symmetric humanoid traversal to diverse open-world human environments. Driven by egocentric vision, the robot reliably traverses grassy slopes, stairs of varying sizes and structures, steps onto high platforms, and crosses wide gaps. Throughout long-horizon traversal, it maintains safe foot placement together with coordinated and natural whole-body motion.}
  \label{fig1}
  \vspace{-14pt}
\end{figure}


\begin{abstract}
Extending humanoid traversal to the open world is key to practical deployment in human environments, but remains challenging. The robot must use vision to ensure safe and reliable foot placement on heterogeneous terrain under highly dynamic motion, while producing coordinated, natural whole-body behaviors. We propose \textbf{SSR}, an efficient end-to-end framework for egocentric vision-based humanoid traversal that jointly learns these capabilities. SSR introduces \textit{imagined foothold guidance}, which learns to model forthcoming swing-foot contacts and evaluates their support to guide pre-touchdown swings toward stable regions, reducing edge slips. It further employs \textit{equivariant latent-space symmetry augmentation} to efficiently induce bilateral coordination under high-dimensional visual observations, and uses \textit{terrain-specific multi-discriminator motion priors} to encourage human-like behavior across scenes. Extensive experiments show that SSR achieves safe, stable, and high-quality locomotion on diverse real-world terrains, including stairs with varied structures and extreme challenges such as wide gaps and high platforms, while enabling reliable long-horizon traversal in open outdoor environments. 
Project website: \url{https://ssr-humanoid.github.io/}.

\end{abstract}

\keywords{Open-World Humanoid Traversal, Safe Foot Placement, Equivariant Symmetry Learning} 


\section{Introduction}
\label{sec1}
\vspace{-5pt}

Open-world human environments require humanoids to traverse heterogeneous terrains with agile, stable, and natural whole-body motion, from everyday stairs to wide gaps, and high platforms~\cite{radosavovic2024learning}. This demands that the robot perceive changing surroundings, place each step within safe support regions, and maintain coordinated, human-like behavior throughout long-horizon locomotion~\cite{bonnen2021binocular}.

Recent learning-based humanoid locomotion methods leverage onboard perception to adapt to terrain changes. Among them, policies conditioned on egocentric depth images are especially promising for dynamic real-world scenes, as they avoid explicit, noise-sensitive mapping~\cite{zhuang2024humanoid, zhu2026hiking, sun2026now, sun2025dpl}. However, turning such perception into robust open-world traversal remains an open challenge. 
The policy must infer local terrain structure from visual cues and convert this understanding into reliable foothold decisions. This is critical for flat-footed humanoids, since stepping near terrain edges sharply reduces the available support area and increases the risk of slips or falls.
Beyond safety, traversal in human environments also requires coordinated, controllable, and natural whole-body motion. Prior work often introduces symmetry priors~\cite{nie2026coordinated, mittal2024symmetry, su2024leveraging} or human-motion imitation~\cite{peng2021amp, tang2024humanmimic, zhang2024whole}, yet integrating them into egocentric-vision policies can be inefficient and brittle. High-dimensional visual observations make symmetry learning computationally expensive, while terrain-dependent dynamics can destabilize a shared human-like style prior. Overall, an efficient framework that jointly enables safe foot-placement behavior, symmetric and natural whole-body motion, and robust traversal across diverse terrains under vision is still lacking.

To address these challenges, we propose \textbf{SSR}, a unified single-stage end-to-end reinforcement learning (RL) framework for vision-based humanoid traversal in complex real-world environments. SSR learns directly from egocentric depth and proprioception to jointly acquire reliable foot placement and high-quality whole-body motion, coupling gait symmetry with human-like behavior.

For safe and accurate foot placement, we first introduce imagined foothold guidance during training. It learns to model forthcoming swing-foot contacts and evaluates their local support before touchdown, converting sparse contact-time safety assessment into dense predictive guidance. Unlike prior methods that assess foothold safety only at or after contact, leaving swing-phase actions with delayed feedback~\cite{zhuang2024humanoid, zhu2026hiking, cheng2024extreme}, this imagined interaction enables the policy to steer the foot early toward flat and well-supported regions, thereby reducing edge contacts, slips, and falls.

To efficiently acquire coordinated whole-body motion, we develop equivariant latent-space symmetry augmentation. It applies mirror transformations to compact latent representations rather than high-dimensional visual observations. Compared with input-level augmentation and symmetry regularization, as well as strict equivariant architectures~\cite{mittal2024symmetry, su2024leveraging, zhang2025wococo}, this strategy reduces the overhead of symmetry learning for vision-based policies while preserving flexibility for exploration. We further incorporate a terrain-specific multi-discriminator adversarial motion prior~\cite{vollenweider2023advanced} to capture terrain-conditioned styles and maintain human-like motion across scenes.

We evaluate SSR in diverse environments. As shown in Fig.~\ref{fig1}, the robot robustly traverses slopes, rough ground, and stairs with varied structures. It can climb onto platforms up to 45 cm high, about $1.6\times$ its shank length, and cross gaps as wide as 90 cm. More importantly, these capabilities transfer to complex outdoor scenes and support reliable long-horizon locomotion with precise foot placement, strong bilateral coordination, robust controllability, and natural motion.

In summary, our main contributions are as follows:

\begin{enumerate}[leftmargin=*, nosep]
\item We present a novel single-stage training framework that jointly learns safe, symmetric, and human-like motion for vision-based humanoid traversal in diverse environments.
\item We introduce imagined foothold guidance, which learns foot-contact foresight, providing dense swing-phase signals for timely correction of unsafe landing decisions and robust foot placement.
\item We develop equivariant latent-space augmentation for efficient symmetry learning, and combine it with terrain-specific style priors to jointly improve whole-body motion quality.
\item We validate the learned policy through extensive indoor and outdoor experiments, demonstrating agile and reliable humanoid locomotion over heterogeneous and challenging open-world terrains.
\end{enumerate}
	


\section{Related Work}
\label{sec2}
\vspace{-6pt}

\paragraph{Learning Perceptive Humanoid Locomotion.}
Data-driven RL has advanced humanoid locomotion under complex contact dynamics and environmental variation~\cite{radosavovic2024learning, wang2025beamdojo, wu2026perceptive}, and exteroceptive sensing lets controllers adapt to terrain before touchdown. Earlier methods construct height maps from LiDAR and odometry~\cite{long2025learning, cui2026pilot, ma2026cmoe, he2025attention}, but depend on accurate localization and are sensitive to latency, drift, limited update rates, and occlusion. Recent approaches infer terrain directly from egocentric depth images~\cite{zhuang2024humanoid, zhu2026hiking, sun2026now, wang2025more, zhuang2026deep}, reducing reliance on mapping. However, many vision-based humanoid systems still target structured settings or require multi-stage pipelines, expert distillation, or dedicated depth modeling and augmentation~\cite{sun2026now, wu2026perceptive, zhang2025distillation, liu2026faststair, rudin2025parkour}. In contrast, we learn a unified traversal policy from egocentric vision in a single-stage framework for diverse unstructured terrains.

\vspace{-10pt}
\paragraph{Safe Foot Placement.}
Classical footstep control decomposes locomotion into perception, planning, and tracking~\cite{mastalli2020motion, agrawal2022vision, fahmi2022vital}. While effective in specific settings, they rely on hand-crafted design and conservative assumptions. RL-based methods learn foot placement end to end by tracking planner-generated footholds~\cite{tsounis2020deepgait, jenelten2024dtc, yu2021visuallocomotion, gangapurwala2022rloc} or penalizing unsafe landing regions~\cite{zhuang2024humanoid, zhu2026hiking, cheng2024extreme, zhuang2023robot}. Yet these signals are often sparse, appearing only at or after contact, weakening swing-phase guidance and training efficiency~\cite{wang2025beamdojo, sutton1984temporal}. \citet{liu2026faststair} densify learning by swing-trajectory tracking, but still depend on rule-based foothold generation and focus on stairs. \citet{zhu2024learning} reward derived foothold actions to improve landing quality. We learn to imagine future footholds during swing, assessing support before touchdown to guide flatter, safer landings without planners or edge detection.

\vspace{-10pt}
\paragraph{Symmetry Learning in Legged Locomotion.}
High-quality humanoid traversal requires coordinated bilateral motion. Prior work exploits morphological symmetry via data augmentation or regularization~\cite{mittal2024symmetry, su2024leveraging, zhang2025wococo, zhang2024learning}, but under visual inputs with short-term memory, these methods require mirrored forward passes on high-dimensional inputs or mirrored hidden-state rollout, increasing computational and memory cost. Strictly equivariant architectures reduce this overhead~\cite{nie2026coordinated}, but may limit symmetry breaking near neutral states for gait exploration \cite{abdolhosseini2019learning}. We use an equivariant encoder to move augmentation into latent space, balancing efficiency and flexibility in symmetry learning.



\vspace{-8pt}
\section{Method}
\label{sec3}
\vspace{-6pt}

Our goal is safe and high-quality humanoid traversal over diverse unstructured terrains directly from proprioception and raw depth in a unified, efficient framework. 
SSR achieves this with imagined foothold guidance for dense pre-contact swing correction (Sec. \ref{sec3.2}), equivariant latent-space symmetry augmentation for efficient bilateral gait learning via compact latent mirroring (Sec. \ref{sec3.3}), and cross-terrain motion priors for human-like whole-body behavior (Sec. \ref{sec3.4}). Fig. \ref{fig2} summarizes SSR.

\vspace{-9pt}
\subsection{Single-Stage Learning of Generalizable Traversal Skills}
\label{sec3.1}
\vspace{-7pt}

We formulate humanoid locomotion as a partially observable Markov decision process (POMDP) and optimize the policy with PPO~\cite{schulman2017proximal} under an asymmetric actor-critic architecture~\cite{nahrendra2023dreamwaq}.

\vspace{-10pt}
\paragraph{Observations and Action.}

At timestep $t$, the policy receives proprioceptive and visual inputs. The proprioception $\mathbf o_t^p \in \mathbb{R}^{72}$ includes base angular velocity $\boldsymbol \omega_t$, projected gravity $\mathbf g_t$, velocity commands $\mathbf u_t$, joint positions $\boldsymbol \theta_t$, joint velocities $\dot{\boldsymbol\theta}_t$, and previous action $\mathbf a_{t-1}$. We stack a short history $\mathbf o_{t-h+1:t}^p$ to reduce partial observability. Together with the depth image $\mathbf I_t \in \mathbb{R}^{36 \times 36}$, this forms the policy observation $\mathbf o_t=(\smash[b]{\mathbf o_{t-h+1:t}^p},\mathbf I_t)$, which maps to actuated joint position targets $\mathbf a_t \in \mathbb{R}^{21}$. The critic additionally uses privileged information, including ground-truth base velocity $\mathbf v_t$, foot velocities $\smash[b]{\mathbf v_t^f}$, contact states $\mathbf c_t$, limb positions $\smash[b]{\mathbf p_t^h}$ and $\smash[b]{\mathbf p_t^f}$, and height maps around the feet and body, $\smash[b]{\mathbf H_t^f}$ and $\smash[b]{\mathbf H_t^b}$.

\begin{figure}[!tbp]
  \centering
  \includegraphics[width=0.88\linewidth]{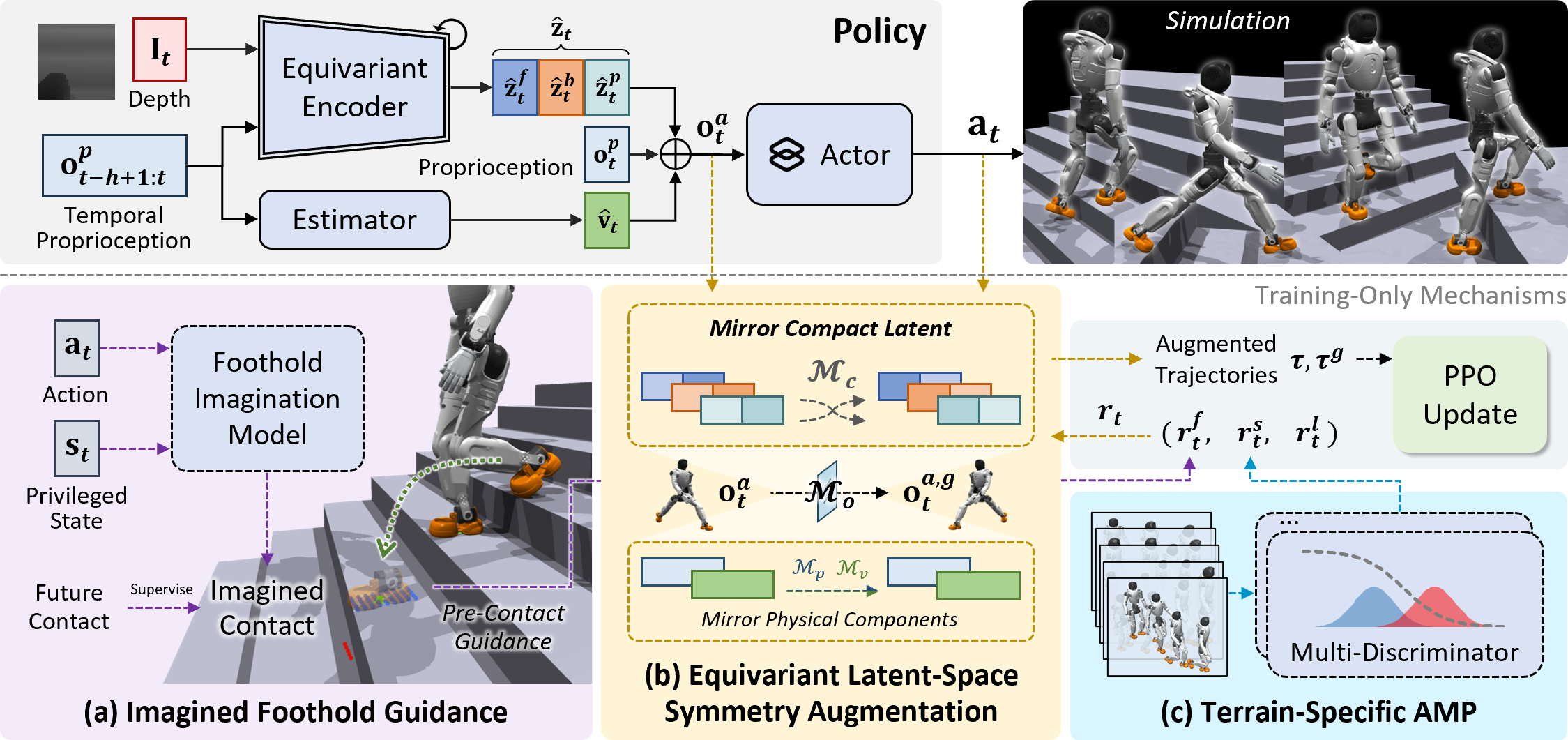}
  \vspace{-4pt}
  \captionsetup{font=footnotesize}
  \caption{\textbf{Overview of the SSR framework}. The policy combines a recurrent equivariant encoder, an estimator, and a MoE actor to learn unified traversal across terrains from egocentric depth images and temporal proprioception. During training, SSR learns surefooted, symmetric, and human-like motion through three mechanisms: 
  (a) imagined foothold guidance for dense pre-contact swing correction,
  (b) equivariant latent augmentation for efficient symmetry learning, and (c) terrain-specific multi-discriminator AMP.}
  \label{fig2}
  \vspace{-18pt}
\end{figure}

\vspace{-10pt}
\paragraph{Policy Architecture.}

Our policy comprises a recurrent cross-modal encoder, a motion-state estimator, and a MoE actor~\cite{huang2025moe, jacobs1991adaptive}. The encoder extracts compact robot-environment context for environmental awareness and adaptability~\cite{luo2024pie,li2025kivi}. It encodes the image with a convolutional neural network (CNN) and temporal proprioception with a multi-layer perceptron (MLP), then fuses them with a gated recurrent unit (GRU) into a latent representation with three heads, $\mathbf{\hat z}_t=[\mathbf{\hat z}_t^f,\mathbf{\hat z}_t^b,\mathbf{\hat z}_t^p]^\top$. Here, $\mathbf{\hat z}_t^f$ and $\mathbf{\hat z}_t^b$ capture foot‑ and base‑centric terrain geometry and decode $\mathbf{\hat H}_t^f$ and $\mathbf{\hat H}_t^b$~\cite{yu2025start}, while $\mathbf{\hat z}_t^p$, learned with a variational autoencoder (VAE), predicts next proprioception $\mathbf{\hat o}_{t+1}^p$ to model system dynamics~\cite{nahrendra2023dreamwaq}. A separate MLP estimator predicts base velocity $\mathbf{\hat v}_t$ from temporal proprioception. During training, they are optimized with a hybrid prediction loss. We further impose mirror equivariance on the encoder to support symmetry learning in Sec. \ref{sec3.3}. The actor takes current proprioception, $\mathbf{\hat v}_t$, and $\mathbf{\hat z}_t$ as input, allowing one policy to represent terrain-dependent motion modes.

\vspace{-10pt}
\paragraph{Reward Functions.}

We group the objective into $r^l$ for task-oriented objectives, $r^f$ for foothold guidance, and $r^s$ for human-like motion style. We use three critics to improve estimation of corresponding value functions~\cite{zargarbashi2025robotkeyframing, huang2025standingup}. See detailed reward definitions in Appendix~\ref{appendixA.3}.

\vspace{-7pt}
\subsection{Imagined Foothold Guidance for Foresighted Foot Placement Correction}
\label{sec3.2}
\vspace{-5pt}

\paragraph{Imagining Future Footholds During Swing.}

During training, a foothold imagination model learns to map privileged state $\mathbf{s}_t$ and action $\mathbf{a}_t$ to a prospective contact distribution $\hat{\mathbf{F}}_t = \{ \hat F_{i,t}\}_{i=1}^2$. For foot $i$, $\hat F_{i,t} = (\boldsymbol {\hat \mu}_{i,t},\hat \sigma_{i,t})$ parameterizes a Gaussian imagined-contact distribution in the current base frame: $q_{i,t}(\mathbf{p})=\mathcal{N}(\mathbf{p};\boldsymbol {\hat \mu}_{i,t},(\hat \sigma_{i,t})^2 \mathbf{I})$, where $\boldsymbol {\hat \mu}_{i,t}\in\mathbb{R}^2$ denotes the anticipated contact location and $\hat \sigma_{i,t}\in\mathbb{R}$ captures uncertainty. This probabilistic form yields smoother support estimates over the imagined landing region. The supervision target $\mathbf{p}_{i,t}^{*}$ is foot $i$'s first valid future contact location, expressed in base frame. We train the model with the Gaussian negative log-likelihood:
\begin{equation}
\setlength{\abovedisplayskip}{-1pt}
\setlength{\belowdisplayskip}{-1pt}
\begin{aligned}
\mathcal{L}_{\text{pred}}=
-\sum_{i=1}^2 \text{log}q_{i,t} (\mathbf{p}_{i,t}^{*})
\propto \sum_{i=1}^2(\frac{||\mathbf{p}_{i,t}^{*}-\boldsymbol {\hat \mu}_{i,t}||^2_2}{2(\hat \sigma_{i,t})^2}+2\text{log}\hat \sigma_{i,t}).
\end{aligned}
\end{equation}
Early in training, unstable gaits produce noisy future-contact targets. We therefore use terrain level as a reliability proxy and enable pre-contact guidance beyond a preset curriculum level.

\vspace{-10pt}
\paragraph{Guiding Safer Swings Before Touchdown.}

\setlength{\columnsep}{7pt}
\setlength{\intextsep}{9pt}
\begin{wrapfigure}[9]{r}{0.53\textwidth}
  \vspace{-8pt}
  \centering
  \includegraphics[width=\linewidth]{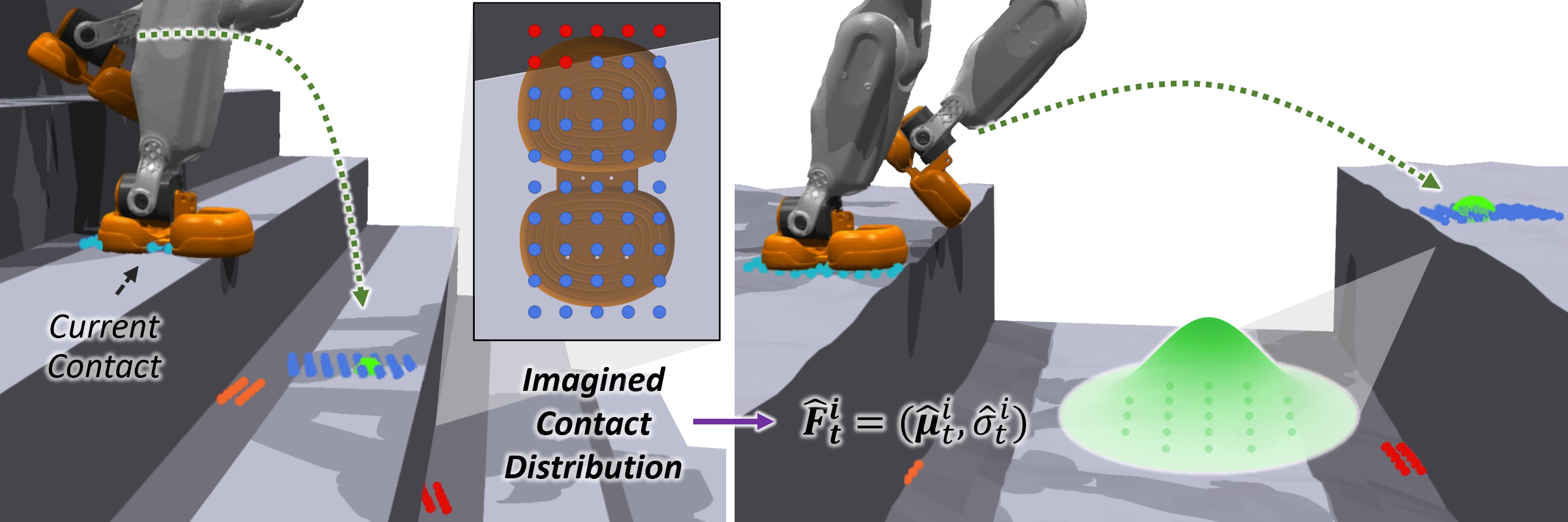}
  \vspace{-16pt}
  \captionsetup{font=footnotesize}
  \caption{\textbf{Imagined foothold guidance}. 
  We measure support deficiency over a sole-sized terrain patch. During swing, a foothold imagination model anticipates future-contact distributions for pre-contact dense guidance.}
  \label{fig3}
\end{wrapfigure}

Given the imagined landing distribution, we form the guidance signal by measuring the unsupported fraction within a sole-covered region. Let $\mathbf H^f(\mathbf{p})\!=\!\{h_k(\mathbf{p}) \}_{k=1}^n$ denote the height map of a 22.5 cm $\times $10 cm sole patch centered at $\mathbf{p}\!\in\!\mathbb{R}^2$, sampled every 2.5 cm. Here, $h_k(\mathbf p)$ is the terrain height at $\mathbf{p}+\delta_k$ and $\delta_k$ is the $k$-th offset. We define \textit{support deficiency} at foothold $\mathbf{p}$ as:
\begingroup
\setlength{\abovedisplayskip}{-1pt}
\setlength{\belowdisplayskip}{-2pt}
\setlength{\abovedisplayshortskip}{-2pt}
\setlength{\belowdisplayshortskip}{-1pt}
\begin{equation}
\rho (\mathbf{p})=1-\frac{1}{n}\sum^n_{k=1}\mathbbm{1}\{[h^f(\mathbf{p})-h_k(\mathbf{p})]<\epsilon_h\},
\end{equation}
\endgroup
where $h^f(\mathbf{p})$ is the sole height. For imagined pre-contact locations, we use $h^f(\mathbf{p})\!=\!\max_k h_k(\mathbf{p})$. Larger $\rho(\mathbf{p})$ indicates less support overlap, often near terrain edges or suspended regions. As each foot alternates between stance and swing, we instantiate this \textit{foothold guidance} in the RL objective:
\begin{equation}
\setlength{\abovedisplayskip}{-1pt}
\setlength{\belowdisplayskip}{-1pt}
\begin{aligned}
r^f_t=\text{exp}({-{(\sum_{i=1}^2 \tilde \rho_{i,t}})^2/{{\sigma_f}}}),
\quad
\tilde \rho_{i,t}=c_{i,t} \rho(\mathbf{p}_{i,t}) + (1-c_{i,t})\mathbb{E}_{\mathbf{p} \sim q_{i,t}}\rho (\mathbf{p}),
\end{aligned}
\end{equation}
where $c_{i,t}$ indicates whether foot $i$ is in stance. Stance feet are evaluated at current contacts, while swing feet use expected deficiency under the imagined-contact distribution $q_{i,t}$, approximated with discrete Gaussian-weighted samples. This formulation converts touchdown-only safety signals into dense pre-contact guidance, encouraging earlier foot-placement correction before touchdown.

\vspace{-6pt}
\subsection{Efficient Symmetry Learning with Equivariant Latent-Space Augmentation}
\label{sec3.3}
\vspace{-4pt}

Input-level symmetry augmentation is costly for visual recurrent policies, as each mirrored sample entails re-encoding the depth image and rolling out a mirrored memory state. We instead encode the original observation once and augment the compact actor input. The key requirement is a mirror-equivariant encoder, so that mirroring the latent is equivalent to encoding the mirrored observation.

\vspace{-10pt}
\paragraph{Mirror Transformations.}

Let $\mathcal{M}_o$, $\mathcal{M}_s$, and $\mathcal{M}_a$ denote mirror transformations for actor observation, privileged state, and action. Since $\mathbf{o}^a_t=[\mathbf{o}^p_t,\hat{\mathbf v}_t,\hat{\mathbf z}_t]^\top$, $\mathcal{M}_o$ combines the physical proprioceptive and velocity transforms $\mathcal{M}_p$ and $\mathcal{M}_v$ with the latent transform $\mathcal{M}_c$. For latent variables without predefined mirror signs, we impose a symmetry-structured representation: each head $\smash[b]{\hat{\mathbf z}^{(k)}_t}$, $\smash[b]{k\in\{f,b,p\}}$, is organized as paired left-right channel groups, and $\mathcal{M}_c$ swaps the two groups. Detailed component-wise rules are listed in Appendix~\ref{AppendixB.1}.

\vspace{-10pt}
\paragraph{Equivariant Latent Encoding.}

\setlength{\columnsep}{8pt}
\setlength{\intextsep}{4pt}
\begin{wrapfigure}[14]{r}{0.42\textwidth}
  \vspace{-14pt}
  \centering
  \includegraphics[width=\linewidth]{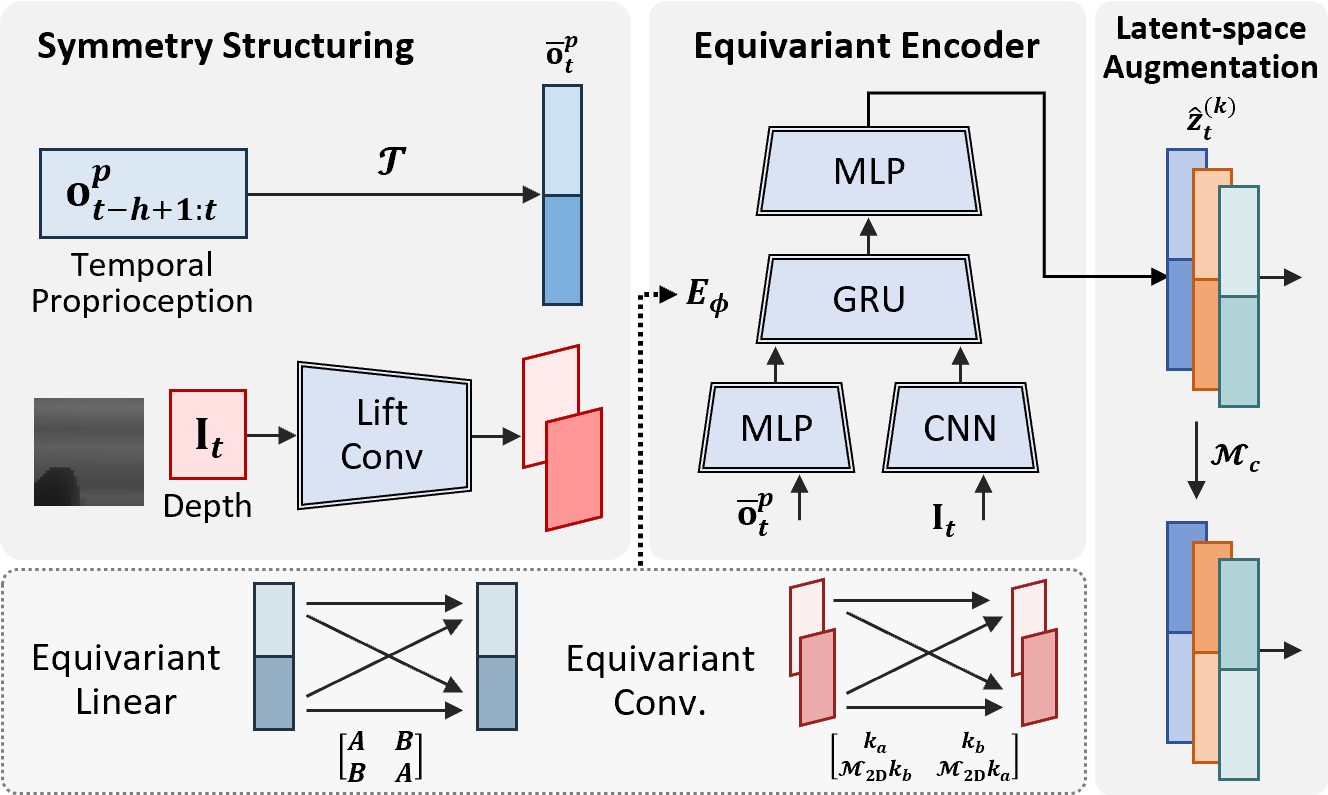}
  \vspace{-14pt}
  \captionsetup{font=footnotesize}
  \caption{\textbf{Equivariant latent-space augmentation}. We form symmetry-structured inputs and build the encoder from equivariant linear and convolutional layers, allowing each latent head to be mirrored by $\smash[b]{\mathcal{M}_c}$.}
  \label{fig4}
\end{wrapfigure}

To make this latent transform valid, the encoder must map mirrored observations to mirrored latents. We first organize its inputs into symmetry-structured form: a fixed operator $\mathcal{T}$ reorganizes temporal proprioception as $\bar{\mathbf o}^p_t=\mathcal{T}(\mathbf o^p_{t-h+1:t})$, while the first CNN layer lifts the image into paired channels. Inspired by~\citet{cesa2022a}, we then implement the MLP, CNN, and GRU with equivariant linear or convolutional layers. For each head $E_\phi^{(k)}$,
\begin{equation}
\setlength{\abovedisplayskip}{-1pt}
\setlength{\belowdisplayskip}{2pt}
\begin{aligned}
E_\phi^{(k)}(\mathcal{M}_c\bar{\mathbf o}^p_t,\mathcal{M}_\mathrm{2D}I_t)=\mathcal{M}_cE_\phi^{(k)}(\bar{\mathbf o}^p_t,I_t),
\end{aligned}
\end{equation}
where $\mathcal{M}_\mathrm{2D}$ denotes horizontal reflection. Thus, the encoder compresses original inputs into a low-dimensional latent that supports channel-swap mirroring. Appendix~\ref{AppendixB.2} gives the construction and proof.

\vspace{-10pt}
\paragraph{Latent-Space Symmetry Augmentation.}

During training, the encoder runs only on the original observation. We then mirror the compact actor observation, privileged state, and action:
$\mathbf{o}^{a,g}_t\!=\!\mathcal{M}_o\mathbf{o}^a_t$, $\mathbf{s}^g_t\!=\!\mathcal{M}_s\mathbf{s}_t$, and $\mathbf{a}^g_t\!=\!\mathcal{M}_a\mathbf{a}_t$.
This gives a mirrored trajectory $\tau^g\!=\!(\mathbf{o}^{a,g}_0,\mathbf{s}^g_0,\mathbf{a}^g_0,r_0\ldots)$, 
which is appended to the original rollout $\tau$ for PPO updates~\cite{mittal2024symmetry}. Compared with input-level augmentation, this avoids mirrored image encoding and recurrent hidden-state rollout, reducing memory and time while retaining flexible symmetry learning.

\vspace{-9pt}
\subsection{Cross-Terrain Motion Priors with Multiple Discriminators}
\label{sec3.4}
\vspace{-6pt}

To encourage terrain-appropriate motion style, we employ multiple discriminators~\cite{vollenweider2023advanced}, one for each terrain type $i$. Each $D_i$ models terrain-specific reference motion style from a five-frame history $\boldsymbol \psi_t$, where each frame $\mathbf s_t^{\mathrm{amp}} \in \mathbb{R}^{63}$ contains projected gravity, base linear and angular velocities, joint positions and velocities, and limb positions. Following AMP~\cite{peng2021amp}, $D_i$ is trained to distinguish reference from policy motions and produce a style reward that encourages terrain-appropriate natural behavior. We collect motion datasets for each terrain type. See details in Appendix \ref{appendixA.8}.


\vspace{-8pt}
\section{Experiments}
\label{sec4}

\vspace{-6pt}
\subsection{Experimental Setup}
\vspace{-4pt}

We evaluate SSR in extensive simulation and real-world experiments to answer three questions: (1) Can it learn a unified policy for high-performance traversal over diverse challenging terrains? (2) Do the key designs improve motion quality and learning efficiency? (3) Can it transfer zero-shot to unseen open-world scenes with long-horizon robustness and generalization?

\vspace{-10pt}
\paragraph{Training Environment.}

We perform single-stage training in Isaac Gym~\cite{makoviychuk2021isaac} with 4,096 AgiBot X2 humanoids on an NVIDIA RTX 4090 for about 20k iterations. We use NVIDIA Warp~\cite{macklin2022warp} to render deployment-consistent depth with terrain and self-occlusion. See Appendices~\ref{appendixA} and~\ref{appendixC}.

\vspace{-10pt}
\paragraph{Onboard Implementation.}

On the real robot, a waist-mounted Intel RealSense D435i captures forward-facing depth at 60 Hz. Images are downsampled and cropped from $640\times360$ to $36\times36$. An onboard Jetson AGX Orin runs network inference and outputs actions at 50 Hz.

\vspace{-10pt}
\paragraph{Compared Baselines.}
\newcommand{\method}[1]{\textit{#1}}

We compare SSR with two perceptive baselines: (1) \method{HPL}~\cite{zhuang2024humanoid}, a multi-stage egocentric vision parkour method; and (2) \method{PIM}~\cite{long2025learning}, a single-stage height-map traversal method with a hybrid internal model. We further consider ablations: (3) \method{NoFoothold} removes foothold guidance $r^f$; 
(4) \method{NoImgn} removes the imagination branch and keeps only the contact-time support assessment in $r^f$; 
(5) \method{NoSym} disables data augmentation and uses a non-equivariant encoder; (6) \method{InpSym} applies input-level mirroring with a non-equivariant encoder; as its peak memory exceeds 24 GB, this baseline alone was trained on a 48 GB RTX 4090; (7) \method{NoStyle} removes the AMP style reward $r^s$; and (8) \method{SglDisc} replaces multiple terrain-specific discriminators with a shared one.

\vspace{-10pt}
\paragraph{Evaluation Metric and Protocol.}

We use success rate, defined as the percentage of trials that complete a 20 s traversal without termination, as the primary metric of traversability. In simulation, each method is evaluated over 1,000 randomized trials per terrain under a 1.0 m/s forward command.

\vspace{-6pt}
\subsection{Simulation Results}
\vspace{-4pt}

\paragraph{Overall Traversal Performance.}

\begin{figure}[!tbp]
  \centering
  \includegraphics[width=0.96\linewidth]{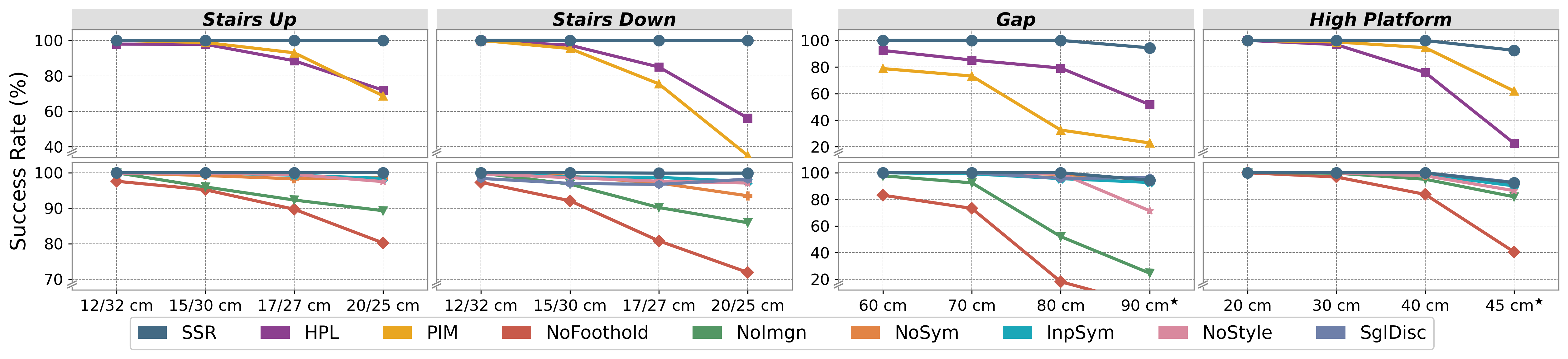}
  \vspace{-6pt}
  \captionsetup{font=footnotesize}
  \caption{Traversal performance across terrains and difficulty levels in simulation. Top row compares SSR with prior methods, and bottom row reports ablation results. \raisebox{0.2ex}{\scalebox{0.7}{$\bigstar$}} denotes difficulties beyond the training curriculum.}
  \label{fig5}
  \vspace{-14pt}
\end{figure}

Fig. \ref{fig5} shows that SSR outperforms all baselines. It maintains near-100\% success across all training difficulties, even on the hardest stairs, where a 22 cm foot has only a 3 cm support margin. It also traverses a 90 cm gap and a 45 cm platform beyond the curriculum range, demonstrating strong geometric adaptability and extrapolation. By contrast, \method{HPL} and \method{PIM} degrade rapidly as difficulty increases, suggesting that sparse or indirect foothold guidance is insufficient for precise foothold learning in these settings. Ablation results show that all SSR components matter, with the largest drops in \method{NoFoothold} and \method{NoImgn}, highlighting the importance of reliable foot placement for stable traversal.

\begin{figure}[!bp]
  \vspace{-13pt}
  \centering
  \includegraphics[width=0.96\linewidth]{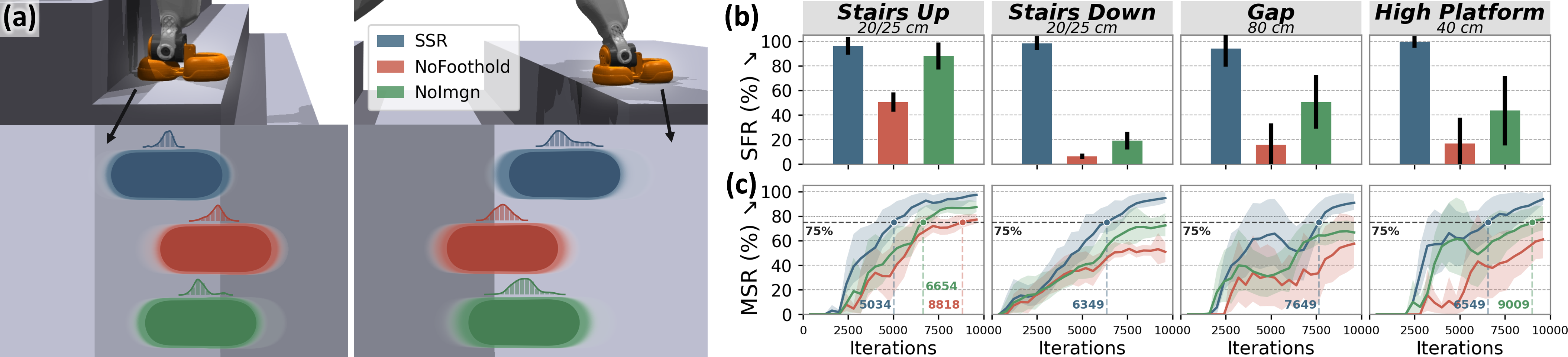}
  \vspace{-4pt}
  \captionsetup{font=footnotesize}
  \caption{Ablation study of safe foot placement learning. (a) Foothold distributions on stairs down and gap, smoothed with kernel density estimation (KDE). For each terrain, (b) the safe foothold rate (SFR); and (c) the mean support ratio (MSR) versus training iterations, evaluated using checkpoints saved every 100 iterations, with the dashed line marking the first point at which the MSR reaches 75\%.}
  \label{fig6}
\end{figure}

\vspace{-10pt}
\paragraph{Safer Foot Placement via Imagined Foothold Guidance.}

We evaluate safe foot placement with two metrics: safe foothold rate (SFR), the fraction of footholds with support ratio above 75\%, and mean support ratio (MSR), averaged over footholds. The support ratio is one minus support deficiency in Sec. \ref{sec3.2}. As shown in Fig. \ref{fig6}, SSR achieves the highest SFR on all terrains, indicating consistently safer contacts. Even on terrains requiring precise landing such as stairs down and gap, it concentrates footholds within valid support regions. The checkpoint-wise MSR curves further show SSR reaches 75\% earliest, reflecting better learning efficiency. In contrast, \method{NoFoothold} produces more aggressive gaits and the lowest safety. \method{NoImgn} uses only contact-time assessment, so swing correction is often delayed until the foot nears the ground and contacts remain edge-biased. These results show that guidance from imagined future contacts improves credit assignment for swing-phase decisions, yields consistently supported landings, and accelerates safe traversal learning.

\begin{figure}[!tbp]
  \centering
  \includegraphics[width=0.96\linewidth]{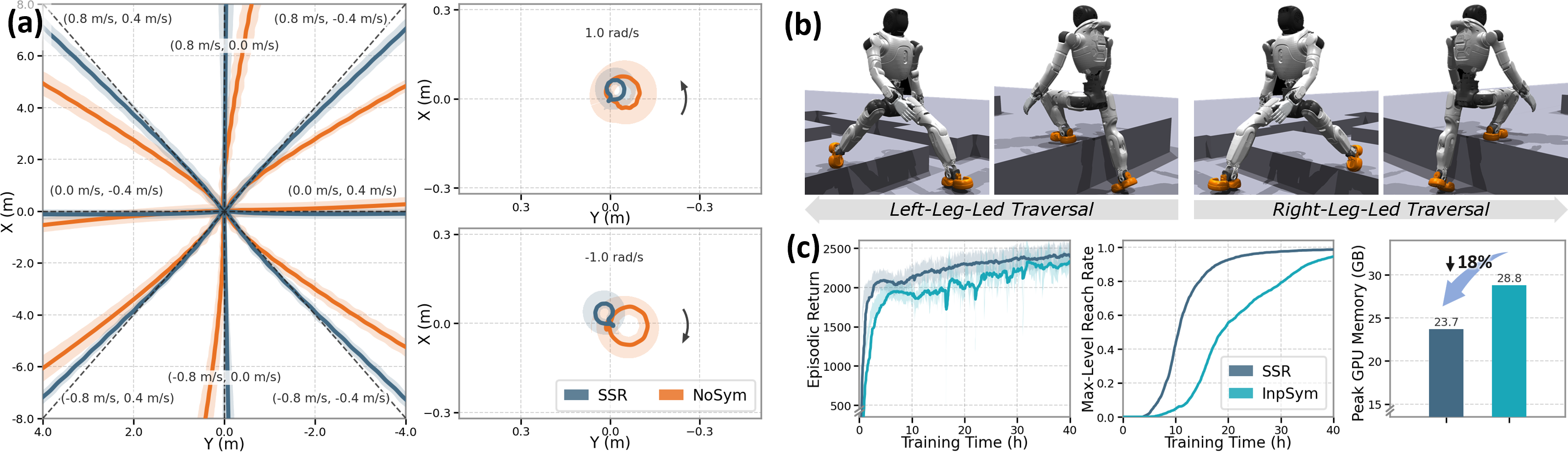}
  \vspace{-6pt}
  \captionsetup{font=footnotesize}
  \caption{Ablation study of symmetry learning. (a) Trajectories under eight linear-velocity and two in-place yaw-rate commands. (b) SSR shows either-leg-led traversal on gaps and platforms, indicating bilateral balance. (c) Training-time curves of episodic return and maximum-terrain-level reach rate, with peak GPU memory.}
  \label{fig7}
  \vspace{-14pt}
\end{figure}

\vspace{-10pt}
\paragraph{Bilateral Coordination via Efficient Latent-Space Symmetry Augmentation.}

Fig. \ref{fig7}(a),(b) shows that symmetry augmentation improves controllability and bilateral coordination. With eight-direction velocity commands, SSR tracks more accurately and, under bidirectional yaw-rate commands, stays more localized during in-place turning, with nearly mirror-symmetric trajectories. \method{NoSym} instead accumulates drift and left-right imbalance, revealing direction-dependent biases and fixed lead-leg dominance. In contrast, SSR supports both left- and right-foot-led traversal, indicating stronger bilateral skill acquisition. Fig. \ref{fig7}(c) further evaluates training efficiency. Compared with \method{InpSym}, SSR achieves higher returns within the same training time, reaches maximum difficulty earlier, and reduces GPU memory from 28.8 GB to 23.7 GB (18\%), making training feasible on one RTX 4090. This gain comes from mirroring a compact latent rather than high-dimensional inputs, avoiding extra encoding passes and hidden-state rollouts. Overall, equivariant latent-space augmentation scales symmetry-driven high-quality locomotion to visual recurrent policy learning.

\vspace{-10pt}
\paragraph{Human-Like Motion via Terrain-Specific Style Priors.}

\newcommand{\pmstd}[1]{\raisebox{0.12ex}{\scalebox{0.78}{\mbox{$\scriptstyle\pm#1$}}}}
\newcommand{\meanstd}[2]{#1{\kern0.03em\pmstd{#2}}}
\newcommand{\meanstdbf}[2]{{\bfseries #1}{\kern0.03em\pmstd{#2}}}
\newcommand{\unit}[1]{{\fontsize{5.4}{5.4}\selectfont\,(#1)}}
\newcommand{\AP}{\textbf{AP}\unit{W}}
\newcommand{\PFCF}{\textbf{PFCF}\unit{N}}
\begin{wraptable}{r}{0.69\textwidth}
\vspace{-5pt}
\begin{minipage}{0.69\textwidth}
\centering
\renewcommand{\arraystretch}{1.2}
\setlength{\tabcolsep}{1.8pt}
\scriptsize

\begin{tabular}{
@{}l@{\hspace{2.0pt}}
c@{\hspace{1.5pt}}c@{\hspace{2.4pt}}
c@{\hspace{1.5pt}}c@{\hspace{2.4pt}}
c@{\hspace{1.5pt}}c@{\hspace{2.4pt}}
c@{\hspace{1.5pt}}c@{\hspace{1.2pt}}
}
\specialrule{1.2pt}{0pt}{1pt}
\multirow[c]{2}{*}[-0.78ex]{\textbf{Method}}
  & \multicolumn{2}{c}{\textbf{Stairs Up}}
  & \multicolumn{2}{c}{\textbf{Stairs Down}}
  & \multicolumn{2}{c}{\textbf{Gap}}
  & \multicolumn{2}{c}{\textbf{High Platform}} \\[-0.5pt]
\cmidrule(l{1.8pt}r{1.8pt}){2-3}
\cmidrule(l{1.8pt}r{1.8pt}){4-5}
\cmidrule(l{1.8pt}r{1.8pt}){6-7}
\cmidrule(l{1.8pt}r{1.8pt}){8-9}
\noalign{\vskip-1.0pt}
  & \AP & \PFCF
  & \AP & \PFCF
  & \AP & \PFCF
  & \AP & \PFCF \\
\specialrule{0.6pt}{1pt}{1pt}
SSR
  & \meanstdbf{381.3}{5.4} & \meanstdbf{507.7}{12.6}
  & \meanstdbf{342.4}{8.6} & \meanstdbf{519.2}{19.3}
  & \meanstdbf{214.4}{5.7} & \meanstdbf{463.4}{15.3}
  & \meanstdbf{253.0}{6.3} & \meanstdbf{480.4}{14.9} \\
NoStyle
  & \meanstd{451.6}{10.1} & \meanstd{538.5}{13.5}
  & \meanstd{371.4}{43.5} & \meanstd{555.5}{21.5}
  & \meanstd{317.0}{6.5} & \meanstd{518.1}{16.6}
  & \meanstd{292.3}{6.3} & \meanstd{508.7}{16.4} \\
SglDisc
  & \meanstd{391.8}{5.9} & \meanstd{544.7}{9.7}
  & \meanstd{345.3}{24.3} & \meanstd{549.4}{23.4}
  & \meanstd{241.1}{7.4} & \meanstd{481.8}{14.7}
  & \meanstd{258.2}{5.9} & \meanstd{505.8}{16.4} \\
\specialrule{1.2pt}{1pt}{0pt}
\end{tabular}
\captionsetup{
    width=\linewidth,
    singlelinecheck=false,
    skip=2pt,
    font=footnotesize
}
\caption{Ablation study of human-like motion, measured by average power (AP) and peak foot contact force (PFCF). Results are reported as mean$_{\pm\mathrm{std}}$.}
\label{table1}
\end{minipage}
\vspace{-8pt}
\end{wraptable}

Human-like legged locomotion is associated with low mechanical cost and smooth contacts~\cite{fu2021minimizing, whittington2009simple}. We therefore report average power and peak foot contact force as motion-quality proxies. Table \ref{table1} shows that SSR achieves the best overall metrics, with more harmonious whole-body motion, including a more natural arm swing and softer impacts, suggesting that the terrain-specific discriminators provide motion priors better aligned with terrain dynamics. Fig. \ref{fig8} illustrates these natural motions across terrains.

\begin{figure}[bp!]
\centering
\vspace{-12pt}
\newcommand{\tabstar}{\makebox[0pt][l]{\hspace{0.12em}\raisebox{0.85ex}{\scalebox{0.5}{$\bigstar$}}}}
\newsavebox{\labtablebox}
\newlength{\labtableheight}
\sbox{\labtablebox}{%
  \renewcommand{\arraystretch}{1.2}%
  \setlength{\tabcolsep}{3.5pt}%
  \scriptsize
  \begin{tabular}{lcc}
    \specialrule{1.2pt}{0pt}{0pt}
    \textbf{Terrain} & \textbf{Settings} & \textbf{Success Rate} \\
    \specialrule{0.8pt}{0pt}{-0.5pt}
    \multicolumn{3}{c}{{\fontsize{6.3pt}{7.0pt}\selectfont\textit{Standard Settings}}} \\
    \specialrule{0.6pt}{-0.5pt}{0pt}
    Stairs Up    & 15 / 30 cm     & 100.0\% {\tiny(20/20)} \\
    Stairs Down  & 15 / 30 cm     & 100.0\% {\tiny(20/20)}\\
    Gap          & 80 cm          & 95.0\% {\tiny(19/20)}\\
    Platform     & 40 cm          & 100.0\% {\tiny(20/20)}\\
    \specialrule{0.6pt}{0pt}{-0.5pt}
    \multicolumn{3}{c}{{\fontsize{6.3pt}{7.0pt}\selectfont\textit{Hard Settings}}} \\
    \specialrule{0.6pt}{-0.5pt}{0pt}
    Gap-H        & 90 cm\tabstar  & 85.0\% {\tiny(17/20)}\\
    Platform-H   & 45 cm\tabstar  & 95.0\% {\tiny(19/20)}\\
    \specialrule{1.2pt}{0pt}{0pt}
  \end{tabular}
}
\setlength{\labtableheight}{\dimexpr\ht\labtablebox+\dp\labtablebox\relax}
\begin{minipage}[t]{0.64\linewidth}
\vspace{0pt}
\centering
\includegraphics[width=\linewidth,height=\labtableheight,keepaspectratio]{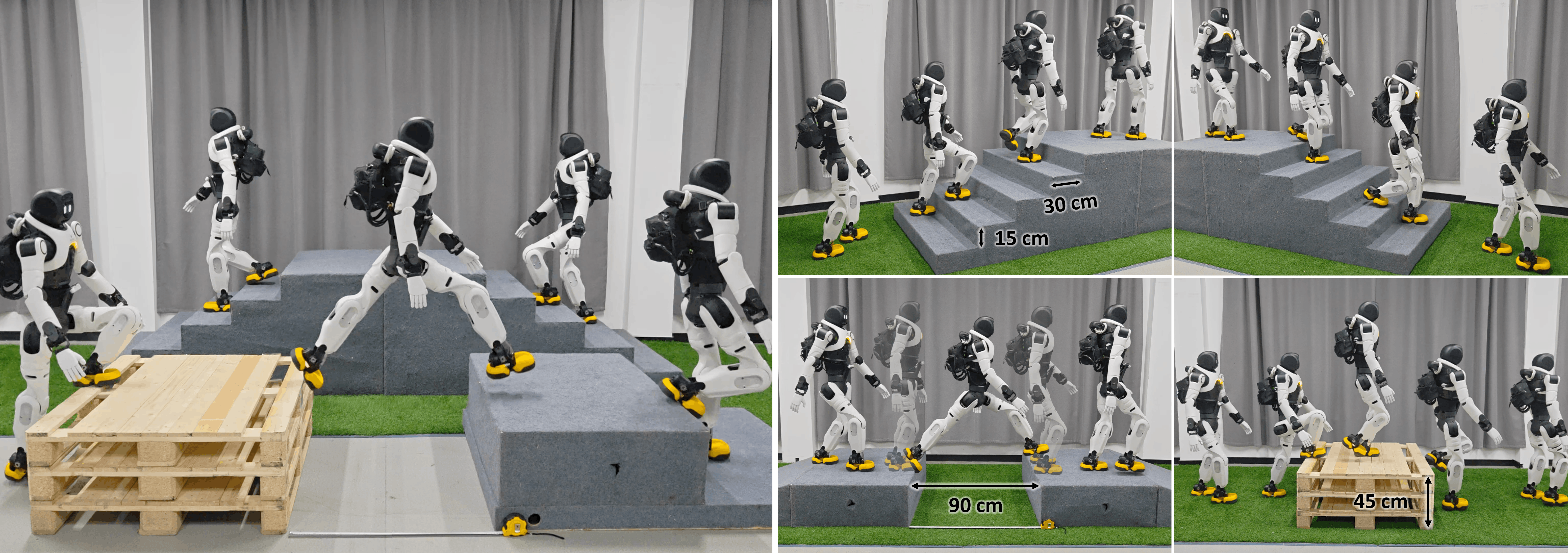}
\captionsetup{
  width=\linewidth,
  singlelinecheck=false,
  skip=2.5pt,
  font=footnotesize
}
\captionof{figure}{Key frames of zero-shot real-world lab-level deployment. SSR enables stair traversal, 90 cm gap crossing, and 45 cm platform climbing with safe foot placement and natural whole-body motion.}
\label{fig8}
\end{minipage}%
\hfill
\begin{minipage}[t]{0.33\linewidth}
\vspace{0pt}
\centering
\usebox{\labtablebox}
\captionsetup{
  width=\linewidth,
  singlelinecheck=true,
  skip=2.5pt,
  font=footnotesize
}
\captionof{table}{Real-world lab-level traversal performance. Success rates are computed over 20 trials per terrain.}
\label{table2}
\end{minipage}
\end{figure}

\vspace{-6pt}
\subsection{Real-World Results}
\vspace{-4pt}

\paragraph{Lab-Level Evaluation.}

We evaluate SSR's sim-to-real transfer via zero-shot deployment on the real robot. Table \ref{table2} shows high success rates on all terrains, including unseen 90 cm gap and 45 cm platform, close to simulation. To the best of our knowledge, these settings are among the most challenging real-world terrain levels reported for a single traversal policy. Fig. \ref{fig8} shows consistently precise footholds on flat support regions, coordinated gait, and natural whole-body behavior, jointly improving traversal reliability and motion quality.

\begin{figure}[!tbp]
  \centering
  \includegraphics[width=1.0\linewidth]{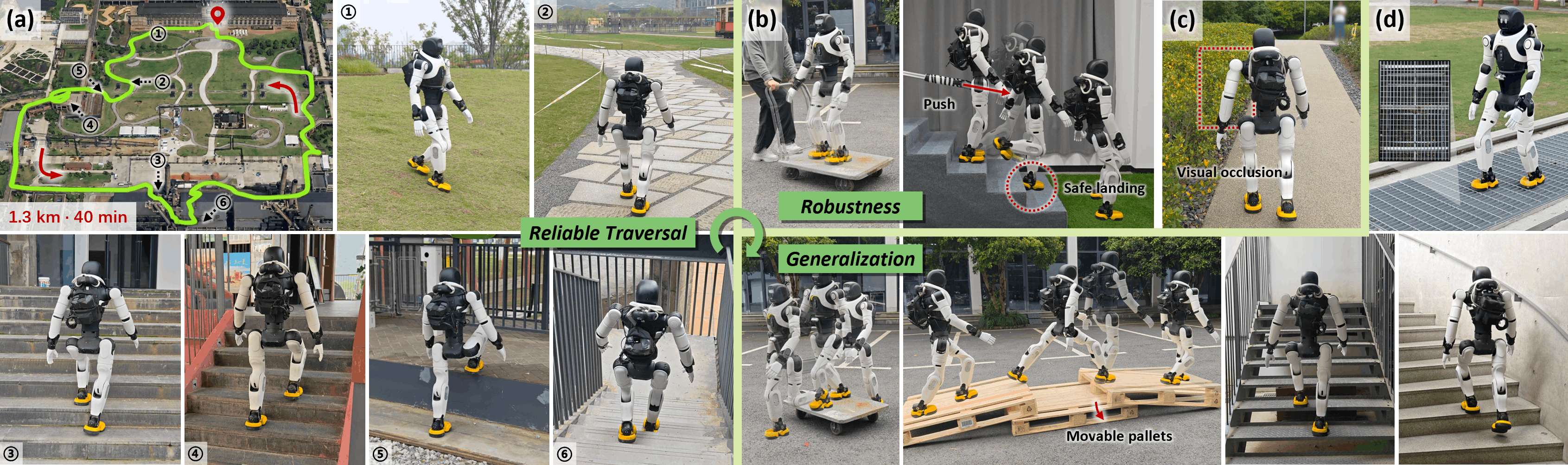}
  \vspace{-15pt}
  \captionsetup{font=footnotesize}
  \caption{(a) The humanoid completes a 1.3 km, 40 min open-world traversal in an industrial heritage park, reliably crossing stairs of varying sizes, high platforms, rough ground, and grassy slopes. We further test robustness to (b) external disturbances, including wheeled-trolley shaking and pushes on descending stairs, and (c) visual occlusions from tall grass. (d) Zero-shot transfer to OOD terrains, including perforated grid flooring, a slippery trolley, movable forklift pallets, narrow 1.3 m-wide open-riser stairs, and spiral stairs.}
  \label{fig9}
  \vspace{-12pt}
\end{figure}

\vspace{-8pt}
\paragraph{Outdoor Deployment.}

We validate SSR in extensive outdoor environments, as shown in Fig. \ref{fig1} and Fig. \ref{fig9}. In an outdoor industrial-park route, the robot completes a continuous 1.3 km traversal across diverse terrains without a reset or misstep, maintaining safe and coordinated locomotion over 40 min. Fig. \ref{fig9}(b)--(d) further presents three qualitative stress tests. \textbf{Perturbation Recovery}: the robot remains stable under horizontal and torsional shaking; when pushed from behind during stair descent, it still lands on the flat tread, showing timely and accurate foothold decisions under limited response time. \textbf{Visual Occlusion Adaptation}: under grass-induced occlusion, it shows no obvious gait degradation and actively steers away from poorly observed regions. \textbf{Zero-Shot Out-of-Distribution (OOD) Generalization}: despite no prior exposure to perforated grids, slippery trolleys, or movable pallets during training, the policy traverses them stably. On narrow open-riser stairs and spiral stairs, it handles visual interference from holes and handrails, avoids sidewalls that could trap the feet, and adapts to varying stair widths, showing strong open-world generalization.

\vspace{-6pt}
\paragraph{Cross-Platform Validation.}

To evaluate embodiment-level applicability, we instantiate SSR on a full-size DEEP Robotics DR02 humanoid. Despite major differences from AgiBot X2, Appendix~\ref{AppendixE} reports high DR02 success rates, suggesting that SSR is not tied to a single humanoid platform.


\vspace{-6pt}
\section{Conclusion}
\label{sec5}
\vspace{-4pt}

In this paper, we present SSR, a unified humanoid traversal framework that maps egocentric vision to stable, high-quality whole-body locomotion over diverse open-world terrains. 
Through imagined foothold guidance, SSR models forthcoming foot-terrain contacts to steer foresighted swing-phase foot-placement correction, enabling precise stepping in dynamic, complex settings.
Our framework also leverages equivariant latent-space symmetry augmentation to efficiently learn bilateral coordination, while terrain-specific multi-discriminator AMP further encourages consistently natural motion across terrains. Extensive experiments demonstrate that SSR generalizes surefooted, symmetric, and human-like traversal to a broad spectrum of challenging scenarios, supporting reliable long-horizon real-world deployment.


\vspace{-7pt}
\section{Limitations and Future Work}
\label{sec6}
\vspace{-4pt}

While SSR traverses diverse terrains effectively, several limitations remain. First, its fixed forward-facing depth camera limits perceptual coverage, making omnidirectional traversal challenging. Future work may incorporate multi-view sensing or active viewpoint adjustment. Second, although SSR is robust to visual occlusions and several outdoor OOD terrains, severe depth-sensing failures, such as specular reflections under strong sunlight, can still distort local geometry. Integrating more robust perceptual embeddings may mitigate this issue. Third, SSR mainly focuses on foot-ground interaction, leaving upper-body contacts underexplored. Extending the framework to multi-contact skills and richer human demonstrations could further expand humanoid mobility.


\clearpage
\acknowledgments{
We thank Qianshi Wang for assistance with the real-world experiments, and gratefully acknowledge AgiBot and DEEP Robotics for providing hardware support. This work was supported in part by the ``Leading Goose'' R\&D Program of Zhejiang under Grant 2023C01177, the National Key R\&D Program of China under Grant 2022YFB4701502, and the 2035 Key Technological Innovation Program of Ningbo City under Grant 2024Z300.
}


\bibliography{main}  

\newpage
\appendix


\section{Policy Training Details}
\label{appendixA}

\subsection{Terrain Curriculum}
\label{appendixA.1}

As shown in Fig.~\ref{fig10}, we generate five types of terrain for policy learning. Each terrain type contains 20 difficulty levels, with the difficulty progressively increased according to the curriculum strategy in \citep{rudin2022learning}. Each terrain instance is constructed over a rectangular region measuring $8\,\mathrm{m} \times 4\,\mathrm{m}$. 

\begin{figure}[H]
\centering
\newsavebox{\terraintablebox}
\newlength{\terraintableheight}
\sbox{\terraintablebox}{%
  \renewcommand{\arraystretch}{1.25}%
  \setlength{\tabcolsep}{5pt}%
  \footnotesize
  \begin{tabular}{ccc}
    \specialrule{1.2pt}{0pt}{1pt}
    \textbf{Terrain Type} & \textbf{Parameter} & \textbf{Range} \\
    \specialrule{0.8pt}{1pt}{0pt}
    Slope & Incline & $[0,\,20]^\circ$ \\
    \specialrule{0.4pt}{0pt}{0pt}
    \multirow[c]{2}{*}{Stairs} & Step length & $[0.25,\,0.4]\,\mathrm{m}$ \\
                               & Step height & $[0.05,\,0.2]\,\mathrm{m}$ \\
    \specialrule{0.4pt}{0pt}{0pt}
    Discrete & Obstacle height & $[0.05,\,0.2]\,\mathrm{m}$ \\
    \specialrule{0.4pt}{0pt}{0pt}
    \multirow[c]{2}{*}{Gap} & Width & $[0.1,\,0.8]\,\mathrm{m}$ \\
                             & Depth & $[0.3,\,0.6]\,\mathrm{m}$ \\
    \specialrule{0.4pt}{0pt}{0pt}
    Platform & Height & $[0.05,\,0.4]\,\mathrm{m}$ \\
    \specialrule{1.2pt}{0pt}{0pt}
  \end{tabular}%
}

\setlength{\terraintableheight}{\dimexpr\ht\terraintablebox+\dp\terraintablebox\relax}

\begin{minipage}[t]{0.51\linewidth}
\vspace{0pt}
\centering
\includegraphics[width=\linewidth,height=\terraintableheight,keepaspectratio]{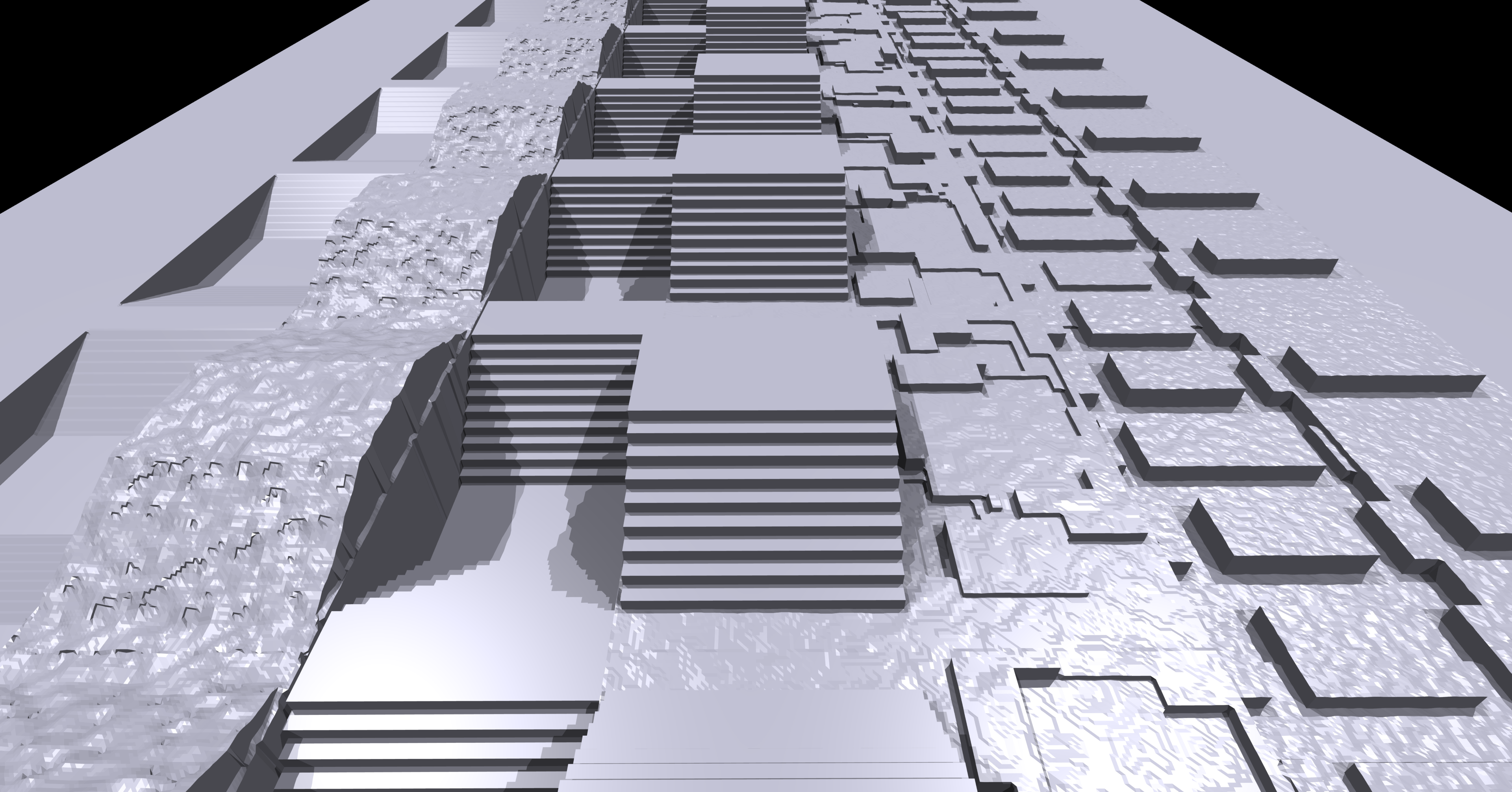}
\captionsetup{
  width=\linewidth,
  justification=centering,
  singlelinecheck=false,
  font=footnotesize,
  skip=4pt
}
\captionof{figure}{Overview of terrain types used for training.}
\label{fig10}
\end{minipage}%
\hspace{0.02\linewidth}%
\begin{minipage}[t]{0.47\linewidth}
\vspace{0pt}
\centering
\usebox{\terraintablebox}
\captionsetup{
  width=\linewidth,
  justification=centering,
  singlelinecheck=false,
  font=footnotesize,
  skip=4pt
}
\captionof{table}{Parameter ranges in the terrain curriculum.}
\label{table3}
\end{minipage}

\end{figure}

\subsection{Commands}
\label{appendixA.2}

At timestep $t$, the locomotion command is defined as 
$\mathbf{u}_t=[v_x^{\mathrm{cmd}},v_y^{\mathrm{cmd}},\omega_z^{\mathrm{cmd}}]^\top$. During training, the robot is commanded to traverse the terrain along a straight path aligned with the $x$-axis of the world frame while tracking a target heading. The yaw angular velocity command $\omega_z^{\mathrm{cmd}}$ is computed from the heading error, while the translational commands $v_x^{\mathrm{cmd}}$ and $v_y^{\mathrm{cmd}}$ are obtained by projecting the prescribed forward velocity from the world frame into the base frame. The terrain-specific ranges of the forward velocity and target heading are summarized in Table~\ref{table4}.

\begin{table}[H]
\footnotesize
\centering
\vspace{2pt}
\renewcommand{\arraystretch}{1.25}
\begin{tabular}{ccc}
\specialrule{1.2pt}{0pt}{1pt}
\textbf{Terrain Type} & \textbf{Forward Velocity (m/s)} & \textbf{Heading (°)} \\
\specialrule{0.8pt}{1pt}{0pt}
Slope & $\mathcal{U}(-1, 1)$ & $\mathcal{U}(-80, 80)$ \\
Stairs, discrete &$\mathcal{U}(0, 1.2)$ & $\mathcal{U}(-30, 30)$\\
Gap, platform &$\mathcal{U}(0, 1.2)$ & $\mathcal{U}(-15, 15)$\\
\specialrule{1.2pt}{0pt}{0pt}
\end{tabular}
\captionsetup{
    width=\linewidth,
    justification=centering,
    singlelinecheck=false,
    font=footnotesize,
    skip=4pt
}
\caption{Ranges of the forward velocity and target heading.}
\label{table4}
\end{table}

To encourage omnidirectional locomotion, we further uniformly sample $\mathbf{u}_t$ in the base frame from a predefined command space after the robot has cleared the highest terrain difficulty and exited the terrain region. This stage exposes the policy to diverse combinations of forward, lateral, and turning motions beyond structured terrain-traversal commands. The corresponding command ranges are listed in Table~\ref{table5}.

\begin{table}[H]
\centering
\renewcommand{\arraystretch}{1.25}
\footnotesize
\begin{tabular}{cc}
\specialrule{1.2pt}{0pt}{1pt}
\textbf{Command Term} & \textbf{Range} \\
\specialrule{0.8pt}{1pt}{0pt}
Forward velocity & $\mathcal{U}(-1, 1)$ m/s \\
Lateral velocity & $\mathcal{U}(-0.4, 0.4)$ m/s \\
Yaw angular velocity & $\mathcal{U}(-1.2, 1.2)$ rad/s \\
\specialrule{1.2pt}{0pt}{0pt}
\end{tabular}
\captionsetup{
    width=\linewidth,
    justification=centering,
    singlelinecheck=false,
    font=footnotesize,
    skip=4pt
}
\caption{Ranges of the locomotion commands.}
\label{table5}
\end{table}

\subsection{Rewards}
\label{appendixA.3}

Table~\ref{table6} summarizes the reward terms used for policy learning, grouped by reward category. For each term, the table provides its mathematical expression and coefficient. It also reports the advantage weight associated with each reward group in the multi-critic formulation. Table~\ref{table7} defines the main symbols used in these reward expressions.

\begin{table}[H]
\centering
\setlength{\tabcolsep}{3pt}
\renewcommand{\arraystretch}{1.25}
\footnotesize
\makebox[\textwidth][c]{
\begin{tabular}{llc}
\specialrule{1.2pt}{0pt}{1pt}
\textbf{Term} & \textbf{Expression} & \textbf{Weight} \\
\specialrule{0.8pt}{1pt}{0pt}
\rowcolor{gray!15}
\multicolumn{1}{l}{(a) \textit{Locomotion Reward}} & \multicolumn{1}{l}{$r^{l}$} & \multicolumn{1}{c}{$w_{l} = 1.0$} \\
\specialrule{0.4pt}{0pt}{0pt}
Linear Velocity Tracking & $\exp\,\bigl(-\lVert \mathbf{v}_{xy}-\mathbf{v}_{xy}^{\mathrm{cmd}}\rVert_2^{2}/\sigma_v\bigr)$ & 1.0 \\
Angular Velocity Tracking & $\exp\,\bigl(-(\boldsymbol \omega_{\mathrm{yaw}}-\boldsymbol \omega_{\mathrm{yaw}}^{\mathrm{cmd}})^{2}/\sigma_{\omega}\bigr)$ & 0.8 \\
Orientation & $\exp\,\bigl(-\lVert \mathbf{g}_{xy}\rVert^{2}/\sigma_g\bigr)$ & 0.5 \\
Angular Velocity (xy) & $\exp\,\bigl(-\lVert \boldsymbol{\omega}_{xy}\rVert_2^{2}/\sigma_{\omega}\bigr)$ & 0.25 \\
Base Height & $\exp\,\bigl(-(h_b-h_b^{\mathrm{tar}})^{2}/\sigma_h\bigr)$ & 0.4 \\
Action Rate & $\frac{1}{N}\lVert \mathbf a_t-\mathbf a_{t-1}\rVert_2^2$ & $-0.12$ \\
Smoothness & $\frac{1}{N}\lVert \mathbf a_t-2\mathbf a_{t-1}+\mathbf a_{t-2}\rVert_2^2$ & $-0.06$ \\
DoF Velocity & $\frac{1}{N}\lVert \dot{\boldsymbol{\theta}}/\dot{\boldsymbol{\theta}}^{\max}\rVert_2^{2}$ & $-0.96$ \\
DoF Torque & $\frac{1}{N}\lVert \boldsymbol{\tau}/\boldsymbol{\tau}^{\max}\rVert_2^{2}$ & $-0.6$ \\
DoF Deviation  & $\frac{1}{N}\lVert \boldsymbol{\theta}-\boldsymbol{\theta}^{\mathrm{def}}\rVert_{1}$ & $-1.8$ \\
DoF Position Limits & $\frac{1}{N}\lVert \mathbbm{1}\!(\boldsymbol{\theta}\notin[\boldsymbol{\theta}^{\min},\boldsymbol{\theta}^{\max}])\rVert_{1}$ & $-1.5$ \\
DoF Velocity Limits & $\frac{1}{N}\lVert \mathbbm{1}\!(\lvert\dot{\boldsymbol{\theta}}\rvert>\dot{\boldsymbol{\theta}}^{\max})\rVert_{1}$ & $-6.0$ \\
DoF Torque Limits & $\frac{1}{N}\lVert \mathbbm{1}\!(\lvert\boldsymbol{\tau}\rvert>\boldsymbol{\tau}^{\max})\rVert_{1}$ & $-6.0$ \\
Stand Still & $\frac{1}{N}\lVert \boldsymbol{\theta}-\boldsymbol{\theta}^{\mathrm{def}}\rVert_{1}\cdot \mathbbm{1}\!(\lVert \mathbf{u}_t\rVert_2\leq\epsilon)$ & $-0.12$ \\
Single Support & $\mathbbm{1}\![
\lVert \mathbf{u}_t\rVert_2 \le \epsilon
\;\lor\;
\exists\, t' \in [t-0.2\,\mathrm{s},\,t]\ \text{s.t.}\ n_{t'}^{\mathrm{con}} = 1
]$ & 0.2 \\
Impact Velocity & $\sum_{i=1}^2 {v^f_{z,i}}^{2} \cdot c_{i}$ & $-1.3$ \\
Contact Slippage & $\sum_{i=1}^2 \lVert\mathbf{v}^f_{xy,i}\rVert_2^{2}\cdot c_{i}$ & $-0.2$ \\
Feet Air Time & $\sum_{i=1}^2 \max(t_{\mathrm{air},i}-t_{\mathrm{air}}^{\mathrm{tar}},0)\cdot c_{i}^{\mathrm{first}}$ & $-2.0$ \\
Feet Stumble & $\sum_{i=1}^2 \mathbbm{1}(\lVert \mathbf{F}^f_{xy,i}\rVert_2 >3\lvert {F}^f_{z,i}\rvert)$ & $-2.0$ \\
Feet Lateral Distance & $\exp\,\bigl(\min(d_{y}-d_{y}^{\min},0)/\sigma_d\bigr)$ & 0.08 \\
\specialrule{0.4pt}{0pt}{0pt}

\rowcolor{gray!15}
\multicolumn{1}{l}{(b) \textit{Foothold Reward}} & \multicolumn{1}{l}{$r^{f}$} & \multicolumn{1}{c}{$w_{f} = 0.25$} \\
\specialrule{0.4pt}{0pt}{0pt}
Imagined Foothold Guidance$^1$ & $\exp\,\bigl({-{(\sum_{i=1}^2 \tilde \rho_{i,t}})^2/ \sigma_f}\bigr)$ & 1.0 \\
\specialrule{0.4pt}{0pt}{0pt}

\rowcolor{gray!15}
\multicolumn{1}{l}{(c) \textit{Style Reward}} & \multicolumn{1}{l}{$r^{s}$} & \multicolumn{1}{c}{$w_{s} = 0.2$} \\
\specialrule{0.4pt}{0pt}{0pt}
AMP & $\max[0,\;1-0.25(D_i(\boldsymbol \psi_t)-1)^2]$ & 1.0 \\
\specialrule{1.2pt}{0pt}{0pt}
\end{tabular}
}

\vspace{1pt}
\begin{minipage}{0.87\textwidth}
\scriptsize
$^1$ On slope terrains, we set $\tilde{\rho}_{i,t}\!=\!0$ for both feet, so that
$r_t^f\!=\!1$, avoiding unintended suppression on inclined contacts.
\end{minipage}

\captionsetup{
    width=\linewidth,
    justification=centering,
    singlelinecheck=false,
    font=footnotesize,
    skip=4pt
}
\caption{Reward terms and weights.}
\label{table6}
\end{table}

\begin{table}[H]
\centering
\renewcommand{\arraystretch}{1.25}
\footnotesize
\begin{tabular}{cl}
\specialrule{1.2pt}{0pt}{1pt}
\textbf{Symbol} & \textbf{Description} \\
\specialrule{0.8pt}{1pt}{0pt}
$\sigma_v,\,\sigma_{\omega},\,\sigma_g,\,\sigma_h,\,\sigma_d,\,\sigma_f$ & Gaussian variance scales, set to 0.25, 0.25, 0.01, 0.01, 0.03, and 0.0625. \\
$\boldsymbol{\theta}^{\min},\,\boldsymbol{\theta}^{\max}$ & Lower and upper joint position limits. \\
$\boldsymbol{\theta}^{\mathrm{def}}$ & Default joint positions. \\
$\dot{\boldsymbol{\theta}}^{\max}$ & Maximum allowable joint velocities. \\
$\boldsymbol{\tau}$ & Computed joint torques. \\
$\boldsymbol{\tau}^{\max}$ & Maximum allowable joint torques. \\
$\epsilon$ & Threshold for identifying zero-command, set to 0.15. \\
$h_b,\ h_b^{\mathrm{tar}}$ & Base height and target base height relative to the ground. $h_b^{\mathrm{tar}}=0.68$. \\
$t_{\mathrm{air},i},\ t_{\mathrm{air}}^{\mathrm{tar}}$ & Air time of foot $i$ and target air time. $t_{\mathrm{air}}^{\mathrm{tar}}=0.4$. \\
$c_{i}^{\mathrm{first}}$ & Binary indicator of whether foot $i$ makes its first contact with the ground. \\
$n_{t'}^{\mathrm{con}}$ & Number of feet in contact with the ground at time $t'$. \\
$\mathbf{F}^f_i$ & Contact force on foot $i$. \\
$\mathbf{v}^f_i$ & Velocity of foot $i$. \\
$d_{y}$ & Lateral distance between the two feet. \\
$d_{y}^{\min}$ & Minimum allowable lateral distance between two feet, set to 0.22. \\
\specialrule{1.2pt}{1pt}{0pt}
\end{tabular}
\captionsetup{
    width=\linewidth,
    justification=centering,
    singlelinecheck=false,
    font=footnotesize,
    skip=4pt
}
\caption{Symbols in the reward definitions.}
\label{table7}
\end{table}

\newpage
\subsection{Network Architectures}
\label{appendixA.4}

Table~\ref{table8} summarizes the network architectures used in our framework.

\newcommand{\tightcmidrule}[1]{
  \noalign{\vskip-\aboverulesep}
  \cmidrule[0.4pt]{#1}
  \noalign{\vskip-\belowrulesep}
}
\begin{table}[H]
\centering
\vspace{5pt}
\renewcommand{\arraystretch}{1.25}
\scriptsize
\begin{tabular}{cccc}
\specialrule{1.2pt}{0pt}{1pt}
\textbf{Module} & \textbf{Submodule} & \textbf{Component} & \textbf{Structure} \\
\specialrule{0.8pt}{1pt}{0pt}
\multirow{7}{*}{\shortstack{Asymmetric\\Actor-Critic}}
& \multirow{4}{*}{MoE Actor}
& Number of experts & 5 \\
& 
& Expert MLP hidden size & [1024, 512, 128] \\
& 
& Gate MLP hidden size & 128 \\
& 
& Activation & ELU \\
\tightcmidrule{2-4}
& \multirow{3}{*}{Multi-Critic}
& Number of critics & 3 \\
& 
& MLP size & [512, 256, 128] \\
& 
& Activation & ELU \\
\specialrule{0.4pt}{0pt}{0pt}
\multirow{14}{*}{Encoder}
& \multirow{3}{*}{Proprioception Encoder}
& MLP hidden size & [512, 256, 128] \\
& 
& Activation & ELU \\
& 
& MLP embedding dim & 128 \\
\tightcmidrule{2-4}
& \multirow{5}{*}{Depth Encoder}
& CNN Channels & [32, 64, 128] \\
& 
& CNN kernel sizes & [8, 4, 3] \\
& 
& CNN strides & [4, 2, 2] \\
& 
& Activation & ELU \\
& 
& CNN embedding dim & 128 \\
\tightcmidrule{2-4}
& \multirow{2}{*}{Temporal Encoder}
& GRU hidden dim & 256 \\
& 
& GRU layers & 1 \\
\tightcmidrule{2-4}
& \multirow{3}{*}{Fusion Encoder}
& MLP hidden size & [512, 256, 128] \\
& 
& Activation & ELU \\
& 
& MLP embedding dim & 48 \\
\tightcmidrule{2-4}
& \multirow{1}{*}{Latent Linear Heads}
& Output dims & $\hat{\mathbf z}_t^f: 16,\ \hat{\mathbf z}_t^b: 16,\ \hat{\mathbf z}_t^p: 16$ \\
\specialrule{0.4pt}{0pt}{0pt}

\multicolumn{2}{c}{\multirow{3}{*}{Estimator}}
& MLP hidden size & [512, 256, 128] \\
\multicolumn{2}{c}{}
& Activation & ELU \\
\multicolumn{2}{c}{}
& Output dims & $\hat{\mathbf v}_t: 3$ \\
\specialrule{0.4pt}{0pt}{0pt}

\multicolumn{2}{c}{\multirow{2}{*}{Foothold Imagination Model}}
& MLP hidden size & [256, 128] \\
\multicolumn{2}{c}{}
& Activation & ELU \\
\specialrule{0.4pt}{0pt}{0pt}

\multicolumn{2}{c}{\multirow{3}{*}{Multi-Discriminator}}
& Number of discriminators & 5 \\
\multicolumn{2}{c}{}
& MLP hidden size & [512, 256, 128] \\
\multicolumn{2}{c}{}
& Activation & ReLU \\
\specialrule{1.2pt}{0pt}{0pt}
\end{tabular}
\captionsetup{
    width=\linewidth,
    justification=centering,
    singlelinecheck=false,
    font=footnotesize,
    skip=4pt
}
\caption{Network architectures.}
\vspace{-14pt}
\label{table8}
\end{table}

\subsection{Training Hyperparameters}
\label{appendixA.5}

Table~\ref{table9} summarizes the main hyperparameters used to train the components of our framework.
We use four separate Adam optimizers for PPO, the encoder and estimator, the foothold imagination model, and the multi-discriminator, all initialized with a learning rate of $5 \times 10^{-4}$.

\begin{table}[H]
\centering
\vspace{5pt}
\renewcommand{\arraystretch}{1.25}
\scriptsize
\begin{tabular}{ccc}
\specialrule{1.2pt}{0pt}{1pt}
\textbf{Component} & \textbf{Parameter} & \textbf{Value} \\
\specialrule{0.8pt}{1pt}{0pt}
\multirow{10}{*}{PPO}
& Value loss coef & 1.0 \\
& Entropy coef & 0.005 \\
& Desired KL & 0.01 \\
& Minimum policy std & 0.2 \\
& GAE $\lambda$ & 0.95 \\
& Reward discount $\gamma$ & 0.99 \\
& Number of environments & 4096 \\
& Num steps per iteration& 24 \\
& Num learning epochs & 5 \\
& Num mini-batches & 4 \\
\specialrule{0.4pt}{0pt}{0pt}
\multirow{4}{*}{Encoder}
& Base height map MSE coef & 2.0 \\
& Foot height maps MSE coef & 1.0 \\
& Next proprioception MSE coef  & 5.0 \\
& VAE KL divergence coef & 1.0 \\
\specialrule{0.4pt}{0pt}{0pt}
Estimator
& Velocity MSE coef & 2.0 \\
\specialrule{0.4pt}{0pt}{0pt}
\multirow{2}{*}{Multi-Discriminator}
& Weight decay & $1 \times 10^{-4}$ \\
& Gradient penalty coef $w^{\mathrm{gp}}$ & 20.0 \\
\specialrule{1.2pt}{0pt}{0pt}
\end{tabular}
\captionsetup{
    width=\linewidth,
    justification=centering,
    singlelinecheck=false,
    font=footnotesize,
    skip=4pt
}
\caption{Training hyperparameters.}
\label{table9}
\end{table}

\subsection{Domain Randomization}
\label{appendixA.6}

Table~\ref{table10} summarizes the domain randomization settings used in our framework, including those applied to the system dynamics and depth sensing.

\begin{table}[H]
\centering
\vspace{5pt}
\renewcommand{\arraystretch}{1.25}
\footnotesize
\setlength{\tabcolsep}{5pt}
\begin{tabular}{ccc}
\specialrule{1.2pt}{0pt}{1pt}
\textbf{Category} & \textbf{Parameter} & \textbf{Range / Value} \\
\specialrule{0.8pt}{1pt}{0pt}

\multirow{10}{*}{\centering Dynamics}
& Base added mass & $\mathcal{U}(-2.0, 12.5)\,\mathrm{kg}$ \\
& Base CoM offset & $\mathcal{U}(-0.1, 0.1)\,\mathrm{m}$ \\
& Link mass & $\mathcal{U}(0.9, 1.1) \times \text{default}$ \\
& Link CoM offset & $\mathcal{U}(-0.01, 0.01)\,\mathrm{m}$ \\
& Motor strength & $\mathcal{U}(0.8, 1.2) \times \text{default}$ \\
& PD gains & $\mathcal{U}(0.9, 1.1) \times \text{default}$ \\
& Ground friction & $\mathcal{U}(0.2, 1.25)$ \\
& Action delay & $\mathcal{U}(0, 2)\,\text{control steps}$ \\
& Push interval & 12 s \\
& Push velocity & $\mathcal{U}(-0.4, 0.4)\,\mathrm{m/s}$ \\
\specialrule{0.4pt}{0pt}{0pt}

\multirow{4}{*}{\centering Depth Sensing}
& Camera position offset & $\mathcal{U}(-0.01, 0.01)\,\mathrm{m}$ \\
& Camera orientation offset & $\mathcal{U}(-1, 1)^\circ$ \\
& Horizontal FoV offset & $\mathcal{U}(-1, 1)^\circ$ \\
& Vertical FoV offset & $\mathcal{U}(-1, 1)^\circ$\\
\specialrule{1.2pt}{0pt}{0pt}
\end{tabular}
\captionsetup{
    width=\linewidth,
    justification=centering,
    singlelinecheck=false,
    font=footnotesize,
    skip=4pt
}
\caption{Domain randomization settings.}
\label{table10}
\end{table}

\subsection{Termination Criteria}
\label{appendixA.7}

An episode is terminated when any of the following conditions is satisfied:

\begin{itemize}[nosep,leftmargin=*]
\item The episode reaches its maximum duration.
\item The robot moves outside the terrain boundary.
\item The contact force on any termination-contact link exceeds a predefined threshold.
\item The robot exhibits severe orientation instability.
\item On gap terrain, any foot drops below a predefined threshold beneath the ground plane.
\end{itemize}

\subsection{Motion Prior Dataset}
\label{appendixA.8}

For each terrain type, the corresponding motion priors are sourced from two datasets: customized motion recordings collected using a motion capture system and a selected subset of the open-source AMASS dataset ~\cite{mahmood2019amass}. All motion sequences are retargeted to our humanoid following~\citet{xie2025kungfubot}. Table~\ref{table11} summarizes the statistics of the motion datasets used for training.

\begin{table}[H]
\centering
\vspace{5pt}
\setlength{\tabcolsep}{4pt}
\renewcommand{\arraystretch}{1.25}
\footnotesize
\resizebox{\columnwidth}{!}{
\begin{tabular}{cccccccccc}
\specialrule{1.2pt}{0pt}{1pt}
\multirow{2}{*}{\raisebox{-0.4ex}{\textbf{Terrain Type}}} & \multicolumn{3}{c}{\textbf{Flat-ground motions}} & \multicolumn{5}{c}{\textbf{Terrain traversals}} & \multirow{2}{*}{\raisebox{-0.4ex}{\textbf{Total Time (s)}}}\\
\noalign{\vskip -1.5pt}
\cmidrule[0.8pt](l{1.2pt}r{1.2pt}){2-4}
\cmidrule[0.8pt](l{1.2pt}r{1.2pt}){5-9}
\noalign{\vskip -1.5pt}
& \textbf{Forward} & \textbf{Backward} & \textbf{Turn} & \textbf{Slope} & \textbf{Stairs} & \textbf{Step} & \textbf{Gap} & \textbf{Platform} &  \\
\specialrule{0.8pt}{1pt}{0pt}
Slope    & \ding{51} & \ding{51} & \ding{51} & \ding{51} &  &  &  &  & 261.3 \\
Stairs   & \ding{51} &            & \ding{51} &            & \ding{51} &  &  &  & 295.8 \\
Discrete & \ding{51} &            & \ding{51} &            &            & \ding{51} &  &  & 136.0 \\
Gap      & \ding{51} &            & \ding{51} &            &            &            & \ding{51} &  & 174.5 \\
Platform & \ding{51} &            & \ding{51} &            &            & \ding{51} &            & \ding{51} & 181.6 \\
\specialrule{1.2pt}{0pt}{0pt}
\end{tabular}
}
\captionsetup{
    width=\linewidth,
    justification=centering,
    singlelinecheck=false,
    font=footnotesize,
    skip=4pt
}
\caption{Motion dataset statistics.}
\label{table11}
\end{table}


\section{Equivariant Network Details}
\label{AppendixB}

\subsection{Mirror Transformations and Symmetry-Structured Encoder Inputs}
\label{AppendixB.1}

This section formalizes the mirror transformations and symmetry operators used in our method. We first define the mirror transformations used for data augmentation in Sec.~\ref{sec3.3}. We then describe the structure of the encoder inputs and introduce the associated operators for constructing symmetry-structured representations, which are also used in the proofs in Appendix~\ref{AppendixB.2}.

We begin with the mirror transformation $\mathcal{M}_{c}$ for the latent representation $\hat{\mathbf{z}}_{t}$, defined as an exchange between the left and right channel groups. More generally, $\mathcal{M}_{c}$ is the primitive mirror operator for symmetry-structured vectors. For a vector whose channel dimension is partitioned into two paired groups, $\mathcal{M}_{c}$ swaps the two groups:
\begin{equation}
\setlength{\abovedisplayskip}{6pt}
\setlength{\belowdisplayskip}{-2pt}
\mathbf{X}
=
[\mathbf{X}^{(L)},\mathbf{X}^{(R)}]^\top
\qquad
\mathcal{M}_{c}(\mathbf{X})
=
[\mathbf{X}^{(R)},\mathbf{X}^{(L)}]^\top.
\end{equation}

We next define the mirror transformation $\mathcal{M}_{p}$ for proprioception. To specify its action on joint-space variables, we introduce the following block notation. For any joint-space quantity, including joint positions, joint velocities, and actions, we write
$\mathbf{x}
\!=\!
\bigl[
\mathbf{x}_{R}^{\mathrm{leg}},\,
\mathbf{x}_{L}^{\mathrm{leg}},\,
x^{\mathrm{waist}},\,
\mathbf{x}_{R}^{\mathrm{arm}},\,
\mathbf{x}_{L}^{\mathrm{arm}}
\bigr]^{\top}
\!\in\! \mathbb{R}^{21}.$
Each leg block is ordered as \textit{(hip pitch, hip roll, hip yaw, knee, ankle pitch, ankle roll)}, and each arm block is ordered as \textit{(shoulder pitch, shoulder roll, shoulder yaw, elbow)}.

Under reflection about the sagittal plane, the left and right kinematic chains are exchanged, and the sign of each coordinate depends on the corresponding physical axis convention. We therefore define the sign vectors for the joint blocks as $\boldsymbol{s}_{\mathrm{leg}}=[1,-1,-1,1,1,-1]^{\top}$ and $\boldsymbol{s}_{\mathrm{arm}}=[1,-1,-1,1]^{\top}$. The resulting joint-block reflection is defined as
\begin{equation}
\setlength{\abovedisplayskip}{4pt}
\setlength{\belowdisplayskip}{4pt}
\mathcal{F}(\mathbf{x})
=
\bigl[
\mathbf{x}_{L}^{\mathrm{leg}}\odot\boldsymbol{s}_{\mathrm{leg}},\,
\mathbf{x}_{R}^{\mathrm{leg}}\odot\boldsymbol{s}_{\mathrm{leg}},\,
-x^{\mathrm{waist}},\,
\mathbf{x}_{L}^{\mathrm{arm}}\odot\boldsymbol{s}_{\mathrm{arm}},\,
\mathbf{x}_{R}^{\mathrm{arm}}\odot\boldsymbol{s}_{\mathrm{arm}}
\bigr]^{\top},
\end{equation}
where $\odot$ denotes element-wise multiplication. For the remaining proprioceptive quantities with explicit physical semantics, namely base-frame polar vectors, base angular velocities, and velocity commands, we use the sign vectors $\boldsymbol{s}_{y}=[1,-1,1]^{\top}$, $\boldsymbol{s}_{\omega}=[-1,1,-1]^{\top}$, and $\boldsymbol{s}_{u}=[1,-1,-1]^{\top}$.

Table~\ref{table12} summarizes the mirror transformations used for data augmentation in Sec.~\ref{sec3.3}, including those for proprioception, actor observations, privileged states, and actions.

\newcommand{\fullcmidrule}[4]{%
  \noalign{\vskip #2}%
  \cmidrule[#1](l{0pt}r{0pt}){#4}%
  \noalign{\vskip #3}%
}

\begin{table}[H]
  \centering
  \renewcommand{\arraystretch}{1.0}
  \scriptsize
  \resizebox{\columnwidth}{!}{%
  \begin{tabular}{ccccc}
      \specialrule{1.2pt}{0pt}{1pt}
      \textbf{Transformation} & \textbf{Component} & \textbf{Variable} & \textbf{Mirror rule} & \textbf{Dim.} \\
      \specialrule{0.8pt}{1pt}{0pt}

      \multirow{7}{*}{$\mathcal{M}_{p}$}
      & \multicolumn{4}{c}{\cellcolor{gray!15}\textbf{Proprioception $\mathbf{o}^{p}_{t}$}} \vspace{-0.3mm}\\
      \fullcmidrule{0.4pt}{-1pt}{-1pt}{2-5}
      & Base angular velocity & $\boldsymbol{\omega}_{t}$ & $\boldsymbol{\omega}_{t}\odot\boldsymbol{s}_{\omega}$ & $3$ \\
      & Projected gravity & $\mathbf{g}_{t}$ & $\mathbf{g}_{t}\odot\boldsymbol{s}_{y}$ & $3$ \\
      & Velocity command & $\mathbf{u}_{t}$ & $\mathbf{u}_{t}\odot\boldsymbol{s}_{u}$ & $3$ \\
      & Joint positions & $\boldsymbol{\theta}_{t}$ & $\mathcal{F}(\boldsymbol{\theta}_{t})$ & $21$ \\
      & Joint velocities & $\dot{\boldsymbol{\theta}}_{t}$ & $\mathcal{F}(\dot{\boldsymbol{\theta}}_{t})$ & $21$ \\
      & Previous action & $\mathbf{a}_{t-1}$ & $\mathcal{F}(\mathbf{a}_{t-1})$ & $21$ \\
      \specialrule{0.4pt}{0pt}{0pt}

      \multirow{4}{*}{\raisebox{-0.7ex}{$\mathcal{M}_{o}$}}
      & \multicolumn{4}{c}{\cellcolor{gray!15}\textbf{Actor observation $\mathbf{o}^{a}_{t}$}} \vspace{-0.3mm} \\
      \fullcmidrule{0.4pt}{-1pt}{-1pt}{2-5}
      & Proprioception & $\mathbf{o}^{p}_{t}$ & $\mathcal{M}_{p}\mathbf{o}^{p}_{t}$ & $72$ \\
      & Base velocity estimate & $\hat{\mathbf{v}}_{t}$ & $\mathcal{M}_{v}\hat{\mathbf{v}}_{t}=\hat{\mathbf{v}}_{t}\odot\boldsymbol{s}_{y}$ & $3$ \\
      & Encoder latent & $\hat {\mathbf{z}}_{t}=[\hat{\mathbf{z}}^{f}_{t},\hat{\mathbf{z}}^{b}_{t},\hat{\mathbf{z}}^{p}_{t}]^\top$ & $[\mathcal{M}_{c}\hat{\mathbf{z}}^{f}_{t},\mathcal{M}_{c}\hat{\mathbf{z}}^{b}_{t},\mathcal{M}_{c}\hat{\mathbf{z}}^{p}_{t}]^\top$ & $48$ \\
      \specialrule{0.4pt}{0pt}{0pt}

      \multirow{8}{*}{\raisebox{-4.0ex}{$\mathcal{M}_{s}$}}
      & \multicolumn{4}{c}{\cellcolor{gray!15}\textbf{Privileged state $\mathbf{s}_{t}$}} \vspace{-0.3mm}\\
      \fullcmidrule{0.4pt}{-1pt}{-1pt}{2-5}
      & Proprioception & $\mathbf{o}^{p}_{t}$ & $\mathcal{M}_{p}\mathbf{o}^{p}_{t}$ & $72$ \\
      & Base linear velocity & $\mathbf{v}_{t}$ & $\mathcal{M}_{v}{\mathbf{v}}_{t}=\mathbf{v}_{t}\odot\boldsymbol{s}_{y}$ & $3$ \\
      & Foot linear velocities & $[\mathbf{v}^{f}_{t,R},\,\mathbf{v}^{f}_{t,L}]^\top$ & $[\mathbf{v}^{f}_{t,L}\odot\boldsymbol{s}_{y},\,\mathbf{v}^{f}_{t,R}\odot\boldsymbol{s}_{y}]^\top$ & $6$ \\
      & Foot contact & $[c_{t,R},\,c_{t,L}]^\top$ & $[c_{t,L},\,c_{t,R}]^\top$ & $2$ \\
      & Hand positions & $[\mathbf{p}^{h}_{t,R},\,\mathbf{p}^{h}_{t,L}]^\top$ & $[\mathbf{p}^{h}_{t,L}\odot\boldsymbol{s}_{y},\,\mathbf{p}^{h}_{t,R}\odot\boldsymbol{s}_{y}]^\top$ & $6$ \\
      & Foot positions & $[\mathbf{p}^{f}_{t,R},\,\mathbf{p}^{f}_{t,L}]^\top$ & $[\mathbf{p}^{f}_{t,L}\odot\boldsymbol{s}_{y},\,\mathbf{p}^{f}_{t,R}\odot\boldsymbol{s}_{y}]^\top$ & $6$ \\
      & Base height map & $\mathbf{H}^{b}_{t}$ & $\mathcal{M}_{\mathrm{2D}}\mathbf{H}^{b}_{t}$ & $18\!\times\!10$ \\
      & Foot height maps & $(\mathbf{H}^{f}_{t,R},\,\mathbf{H}^{f}_{t,L})$ & $(\mathcal{M}_{\mathrm{2D}}\mathbf{H}^{f}_{t,L},\,\mathcal{M}_{\mathrm{2D}}\mathbf{H}^{f}_{t,R})$ & $2\!\times\!10\!\times\!5$ \\
      \specialrule{0.4pt}{0pt}{0pt}

      \multirow{2}{*}{\raisebox{-0.4ex}{$\mathcal{M}_{a}$}}
      & \multicolumn{4}{c}{\cellcolor{gray!15}\textbf{Action $\mathbf{a}_{t}$}} \vspace{-0.3mm}\\
      \fullcmidrule{0.4pt}{-1pt}{-1pt}{2-5}
      & Action & $\mathbf{a}_{t}$ & $\mathcal{F}(\mathbf{a}_{t})$ & $21$ \\
      \specialrule{1.2pt}{1pt}{0pt}
  \end{tabular}}
    \captionsetup{
        width=\linewidth,
        justification=centering,
        singlelinecheck=false,
        font=footnotesize,
        skip=4pt
    }
  \caption{Component-wise mirror rules used in the data augmentation procedure for proprioceptions, latent representations, actor observations, privileged states, and actions.}
  \label{table12}
\end{table}

Next, we analyze the structure of the encoder inputs and define the corresponding mirror operators used in the proofs in Appendix~\ref{AppendixB.2}. Because mirror equivariance is implemented through modules with explicitly paired left-right channels, their inputs must share the same symmetry-structured form. In our setting, this form is obtained by organizing the features into paired left and right channels. We describe this construction below for temporal proprioceptive and image inputs.

We first consider the temporal proprioception $\mathbf{o}^{p}_{t-h+1:t}$. This representation does not explicitly separate the left and right sides. We therefore introduce a fixed reorganization operator $\mathcal{T}$ that maps the length-$h$ proprioceptive history into a symmetry-structured left-right representation:
\begin{equation}
\bar{\mathbf{o}}^{p}_{t}
=
\mathcal{T}\bigl(\mathbf{o}^{p}_{t-h+1:t}\bigr)
=
[\bar{\mathbf{o}}^{p}_{t,L},\bar{\mathbf{o}}^{p}_{t,R}]^\top,
\qquad
\bar{\mathbf{o}}^{p}_{t,L},\,\bar{\mathbf{o}}^{p}_{t,R}\in\mathbb{R}^{42\times h}.
\end{equation}
Each branch is constructed by concatenating the corresponding per-frame slices over the past $h$ time steps. The definitions of the per-frame slices $\bar{\mathbf{o}}^{p}_{t-k,L}$ and $\bar{\mathbf{o}}^{p}_{t-k,R}$, for $k\in\{0,\ldots,h-1\}$, are listed in Table~\ref{table13}. Under this reorganization, the mirror transformation reduces to simple branch exchange:
\begin{equation}
\mathcal{M}_{c}\bigl(\bar{\mathbf{o}}^{p}_{t}\bigr)
=
[\bar{\mathbf{o}}^{p}_{t,R},\bar{\mathbf{o}}^{p}_{t,L}]^\top.
\end{equation}

\begin{table}[H]
  \centering
  \vspace{-5pt}
  \renewcommand{\arraystretch}{1.05}
  \scriptsize
  \resizebox{\columnwidth}{!}{%
  \begin{tabular}{cccc}
    \specialrule{1.2pt}{0pt}{1pt}
      \textbf{Component} & \textbf{Left slice $\bar{\mathbf{o}}^{p}_{t-k,L}$} & \textbf{Right slice $\bar{\mathbf{o}}^{p}_{t-k,R}$} & \textbf{Dim.}\\
      \specialrule{0.8pt}{1pt}{0pt}
      Base angular velocity 
      & $\boldsymbol{\omega}_{t-k}$ 
      & $\boldsymbol{\omega}_{t-k}\odot\boldsymbol{s}_{\omega}$ 
      & $3$\\
      Gravity vector 
      & $\mathbf{g}_{t-k}$ 
      & $\mathbf{g}_{t-k}\odot\boldsymbol{s}_{y}$ 
      & $3$\\
      Velocity command 
      & $\mathbf{u}_{t-k}$ 
      & $\mathbf{u}_{t-k}\odot\boldsymbol{s}_{u}$ 
      & $3$\\
      Joint angles 
      & $[\boldsymbol{\theta}^{\mathrm{leg}}_{t-k,L},\,\boldsymbol{\theta}^{\mathrm{arm}}_{t-k,L},\,\theta^{\mathrm{waist}}_{t-k}]^\top$ 
      & $[\boldsymbol{\theta}^{\mathrm{leg}}_{t-k,R}\odot\boldsymbol{s}_{\mathrm{leg}},\,\boldsymbol{\theta}^{\mathrm{arm}}_{t-k,R}\odot\boldsymbol{s}_{\mathrm{arm}},\,-\theta^{\mathrm{waist}}_{t-k}]^\top$ 
      & $11$\\
      Joint angular velocities
      & $[\dot{\boldsymbol{\theta}}^{\mathrm{leg}}_{t-k,L},\,\dot{\boldsymbol{\theta}}^{\mathrm{arm}}_{t-k,L},\,\dot{\theta}^{\mathrm{waist}}_{t-k}]^\top$ 
      & $[\dot{\boldsymbol{\theta}}^{\mathrm{leg}}_{t-k,R}\odot\boldsymbol{s}_{\mathrm{leg}},\,\dot{\boldsymbol{\theta}}^{\mathrm{arm}}_{t-k,R}\odot\boldsymbol{s}_{\mathrm{arm}},\,-\dot{\theta}^{\mathrm{waist}}_{t-k}]^\top$ 
      & $11$\\
      Previous action 
      & $[\mathbf{a}^{\mathrm{leg}}_{t-k-1,L},\,\mathbf{a}^{\mathrm{arm}}_{t-k-1,L},\,a^{\mathrm{waist}}_{t-k-1}]^\top$ 
      & $[\mathbf{a}^{\mathrm{leg}}_{t-k-1,R}\odot\boldsymbol{s}_{\mathrm{leg}},\,\mathbf{a}^{\mathrm{arm}}_{t-k-1,R}\odot\boldsymbol{s}_{\mathrm{arm}},\,-a^{\mathrm{waist}}_{t-k-1}]^\top$ 
      & $11$\\
      \specialrule{1.2pt}{0pt}{0pt}
  \end{tabular}}
  \captionsetup{
    width=\linewidth,
    justification=centering,
    singlelinecheck=false,
    font=footnotesize,
    skip=4pt
  }
  \caption{Per-frame left and right slices used by the reorganization operator $\mathcal{T}$.}
  \label{table13}
  \vspace{-15pt}
\end{table}

We then consider the image input $\mathbf{I}_{t}$. Images naturally exhibit a left-right spatial structure. We therefore use the horizontal reflection operator $\mathcal{M}_{\mathrm{2D}}$ for two-dimensional vectors. For $\mathbf{Y}\in\mathbb{R}^{H\times W}$, including an image or any image-derived representation, $\mathcal{M}_{\mathrm{2D}}$ acts along the width dimension and performs a horizontal flip:
\begin{equation}
\bigl[\mathcal{M}_{\mathrm{2D}}(\mathbf{Y})\bigr]_{i,j}
=
\mathbf{Y}_{i,\,W+1-j},
\qquad
i=1,\ldots,H,\;\; j=1,\ldots,W.
\end{equation}

Although an image already has a left-right spatial structure, it does not directly provide paired left-right channels along the channel dimension. To address this issue, we introduce a lift convolution layer in the equivariant CNN (Appendix~\ref{AppendixB.2}), which lifts the input image to a symmetry-structured feature representation with explicitly paired left and right channels.

\subsection{Equivariance Proofs for Network Components}
\label{AppendixB.2}

To construct an encoder that satisfies the desired equivariance property, we design each module as an equivariant network. These modules include equivariant MLPs, equivariant GRUs, and an equivariant CNN, which can in turn be decomposed into compositions of equivariant linear and convolutional layers. In this section, we first present the construction and properties of these equivariant network components and provide the corresponding proofs. We then use these results to establish the equivariance of each latent head in the encoder of Sec.~\ref{sec3.3}.

\paragraph{Equivariant Linear Layer}

An equivariant linear layer has the same affine form as a standard linear layer, but its weight matrix and bias are constrained to follow a symmetry-preserving block structure. We define $f:\mathbb{R}^{2C_{\mathrm{in}}}\rightarrow\mathbb{R}^{2C_{\mathrm{out}}}$ as
\begin{equation}
f(\mathbf{x})
=
W\mathbf{x}+\mathbf{b},
\qquad
\mathbf{x}
=
\begin{bmatrix}
\mathbf{x}^{(1)}\\
\mathbf{x}^{(2)}
\end{bmatrix},
\qquad
W=
\begin{bmatrix}
A & B\\
B & A
\end{bmatrix},
\qquad
\mathbf{b}
=
\begin{bmatrix}
\mathbf{a}\\
\mathbf{a}
\end{bmatrix}.
\end{equation}
Here, $\mathbf{x}\in\mathbb{R}^{2C_{\mathrm{in}}}$ is the symmetry-structured input feature vector, $A,B\in\mathbb{R}^{C_{\mathrm{out}}\times C_{\mathrm{in}}}$ are learnable weight blocks, and $\mathbf{a}\in\mathbb{R}^{C_{\mathrm{out}}}$ is the shared bias.

\begin{proposition}
\label{prop1}
The linear layer $f$ is $\mathcal{M}_c$-equivariant:
\begin{equation}
f(\mathcal{M}_{c}\mathbf{x})
=
\mathcal{M}_{c}f(\mathbf{x}).
\end{equation}
\end{proposition}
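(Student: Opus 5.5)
The plan is a direct block computation. I would write $\mathcal{M}_c$ as the permutation matrix
\[
P=\begin{bmatrix} 0 & I_{C}\\ I_{C} & 0\end{bmatrix},
\]
with $C=C_{\mathrm{in}}$ on the input side and $C=C_{\mathrm{out}}$ on the output side. Then $\mathcal{M}_c\mathbf{x}=P_{\mathrm{in}}\mathbf{x}$, and $\mathcal{M}_c f(\mathbf{x})=P_{\mathrm{out}}(W\mathbf{x}+\mathbf{b})$. Because $f$ is affine, the claim $f(\mathcal{M}_c\mathbf{x})=\mathcal{M}_c f(\mathbf{x})$ for all $\mathbf{x}$ is equivalent to two separate identities:
\[
W P_{\mathrm{in}}=P_{\mathrm{out}}W
\quad\text{(linear part)},
\qquad
\mathbf{b}=P_{\mathrm{out}}\mathbf{b}
\quad\text{(bias part)}.
\]
One direction of this equivalence follows by comparing the two sides at $\mathbf{x}=\mathbf{0}$.

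For the linear part, I would multiply the blocks out explicitly. Right-multiplying by $P_{\mathrm{in}}$ swaps the block columns of $W$, giving $\begin{bmatrix} B & A\\ A & B\end{bmatrix}$. Left-multiplying by $P_{\mathrm{out}}$ swaps the block rows, which gives the same matrix. Equivalently, one can compute componentwise:
\[
f(\mathcal{M}_c\mathbf{x})=\begin{bmatrix}A\mathbf{x}^{(2)}+B\mathbf{x}^{(1)}+\mathbf{a}\\ B\mathbf{x}^{(2)}+A\mathbf{x}^{(1)}+\mathbf{a}\end{bmatrix}.
\]
This is exactly $f(\mathbf{x})=\begin{bmatrix}A\mathbf{x}^{(1)}+B\mathbf{x}^{(2)}+\mathbf{a}\\ B\mathbf{x}^{(1)}+A\mathbf{x}^{(2)}+\mathbf{a}\end{bmatrix}$ with its two halves exchanged.

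For the bias part, swapping the two halves of $\mathbf{b}=[\mathbf{a};\mathbf{a}]$ leaves it unchanged, because both halves equal $\mathbf{a}$.

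There is no real obstacle here. The only point needing care is bookkeeping: $\mathcal{M}_c$ acts on spaces of different sizes ($2C_{\mathrm{in}}$ and $2C_{\mathrm{out}}$), so the block partition of $W$ has to match the paired-channel convention of the paper's definition of $\mathcal{M}_c$. I would also note that an elementwise nonlinearity commutes with any permutation of coordinates. This remark is what later allows stacked equivariant linear layers with activations to form an equivariant MLP.
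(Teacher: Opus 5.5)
Your proof is correct and is the same as the paper's. The paper's argument is exactly the componentwise substitution you display, which shows that $f(\mathcal{M}_c\mathbf{x})$ equals $f(\mathbf{x})$ with its two halves exchanged; your split into $WP_{\mathrm{in}}=P_{\mathrm{out}}W$ and $P_{\mathrm{out}}\mathbf{b}=\mathbf{b}$ only repackages that computation.
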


\begin{proof}
Direct substitution yields
\[
f(\mathcal{M}_{c}\mathbf{x})
=
\begin{bmatrix}
A\mathbf{x}^{(2)}+B\mathbf{x}^{(1)}+\mathbf{a}\\
B\mathbf{x}^{(2)}+A\mathbf{x}^{(1)}+\mathbf{a}
\end{bmatrix}
=
\mathcal{M}_{c}
\begin{bmatrix}
A\mathbf{x}^{(1)}+B\mathbf{x}^{(2)}+\mathbf{a}\\
B\mathbf{x}^{(1)}+A\mathbf{x}^{(2)}+\mathbf{a}
\end{bmatrix}
=
\mathcal{M}_{c}f(\mathbf{x}).
\]
\end{proof}

\paragraph{Equivariant MLP}

The equivariant MLP follows the standard MLP form, except that each linear layer is replaced with an equivariant linear layer $f_{\ell}$:
\begin{equation}
F_{\mathrm{MLP}}(\mathbf{x})
=
f_{L}
\Bigl(
\sigma
\bigl(
f_{L-1}
(
\cdots
\sigma(f_{1}(\mathbf{x}))
\cdots
)
\bigr)
\Bigr),
\end{equation}
where $\sigma$ denotes the activation function.

\begin{proposition}
\label{prop2}
The equivariant MLP $F_{\mathrm{MLP}}$ is $\mathcal{M}_c$-equivariant:
\begin{equation}
F_{\mathrm{MLP}}(\mathcal{M}_{c}\mathbf{x})
=
\mathcal{M}_{c}F_{\mathrm{MLP}}(\mathbf{x}).
\end{equation}
\end{proposition}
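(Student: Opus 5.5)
The plan is to argue by induction on the number of layers, using Proposition~\ref{prop1} for each linear layer. The only additional fact needed is that the activation commutes with the channel-swap operator $\mathcal{M}_c$. Once that holds, the composition of equivariant maps is equivariant, and the result follows.

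\textbf{Step 1 (activation commutes with $\mathcal{M}_c$).} The activation $\sigma$ (ELU, or any pointwise nonlinearity) acts coordinatewise, $[\sigma(\mathbf{y})]_j=\sigma(y_j)$. The operator $\mathcal{M}_c$ is a coordinate permutation: it exchanges the block $\mathbf{y}^{(1)}$ with $\mathbf{y}^{(2)}$ while preserving the order within each block. Hence
\[
\sigma(\mathcal{M}_c\mathbf{y})=\bigl[\sigma(\mathbf{y}^{(2)}),\sigma(\mathbf{y}^{(1)})\bigr]^\top=\mathcal{M}_c\,\sigma(\mathbf{y}).
\]
This holds for any permutation and any elementwise function. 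The point to check is that every hidden width is even, namely $2C_\ell$, and that the pairing of the output of $f_\ell$ is the same pairing used as the input of $f_{\ell+1}$. Both hold by construction, since each $f_\ell:\mathbb{R}^{2C_{\ell-1}}\to\mathbb{R}^{2C_\ell}$ has the block form above.

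\textbf{Step 2 (composition).} Define the partial maps $g_1=f_1$ and $g_\ell=f_\ell\circ\sigma\circ g_{\ell-1}$, so that $F_{\mathrm{MLP}}=g_L$. Assume $g_{\ell-1}(\mathcal{M}_c\mathbf{x})=\mathcal{M}_c\,g_{\ell-1}(\mathbf{x})$. Applying Step 1 and then Proposition~\ref{prop1} to $f_\ell$ gives
\[
g_\ell(\mathcal{M}_c\mathbf{x})=f_\ell\bigl(\sigma(\mathcal{M}_c g_{\ell-1}(\mathbf{x}))\bigr)=f_\ell\bigl(\mathcal{M}_c\sigma(g_{\ell-1}(\mathbf{x}))\bigr)=\mathcal{M}_c\,g_\ell(\mathbf{x}).
\]
The base case $g_1=f_1$ is exactly Proposition~\ref{prop1}. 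Induction up to $\ell=L$ then yields the claim.

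\textbf{Main obstacle.} There is no real difficulty here; the only subtlety is Step 1. Equivariance must be stated for $\mathcal{M}_c$ at each layer's own dimension: $\mathcal{M}_c$ is dimension-dependent, and one must verify that $\sigma$ acts elementwise rather than mixing channels. If a channel-mixing normalization such as LayerNorm were inserted, it would need its own separate check. With plain pointwise activations, the argument is immediate.
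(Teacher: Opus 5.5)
Your proof is correct and matches the paper's argument: both apply Proposition~\ref{prop1} to each linear layer, use the fact that an elementwise activation commutes with the channel-swap permutation $\mathcal{M}_c$, and conclude by composition. The only cosmetic difference is that you group the layers as $f_\ell\circ\sigma$ and run an explicit induction, whereas the paper groups them as $\sigma\circ f_\ell$ and appeals directly to closure of equivariant maps under composition.
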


\begin{proof}
Let
\[
G_\ell =
\begin{cases}
\sigma \circ f_\ell, & \ell=1,\dots,L-1,\\
f_L, & \ell=L.
\end{cases}
\]
By Proposition~\ref{prop1}, each equivariant linear layer $f_\ell$ commutes with $\mathcal{M}_c$. Since $\sigma$ is applied element-wise, it also commutes with $\mathcal{M}_c$. Hence every $G_\ell$ is $\mathcal{M}_c$-equivariant. Because a composition of equivariant maps is equivariant,
\[
F_{\mathrm{MLP}}
=
G_L \circ G_{L-1} \circ \cdots \circ G_1
\]
also satisfies
\[
F_{\mathrm{MLP}}(\mathcal{M}_{c}\mathbf{x})
=
\mathcal{M}_{c}F_{\mathrm{MLP}}(\mathbf{x}).
\]
\end{proof}

\paragraph{Equivariant GRU}

The equivariant GRU follows the standard update rule, with $\mathbf{h}_t=F_{\mathrm{GRU}}(\mathbf{x}_t,\mathbf{h}_{t-1})$, where all affine transformations use the same block-structured parameterization as the equivariant linear layer:
\begin{equation}
\left\{
\begin{aligned}
\mathbf{r}_t
&=
\sigma
\left(
W_{\mathrm{ir}}\mathbf{x}_t+\mathbf{b}_{\mathrm{ir}}
+
W_{\mathrm{hr}}\mathbf{h}_{t-1}+\mathbf{b}_{\mathrm{hr}}
\right),\\
\mathbf{z}_t
&=
\sigma
\left(
W_{\mathrm{iz}}\mathbf{x}_t+\mathbf{b}_{\mathrm{iz}}
+
W_{\mathrm{hz}}\mathbf{h}_{t-1}+\mathbf{b}_{\mathrm{hz}}
\right),\\
\mathbf{n}_t
&=
\tanh
\left(
W_{\mathrm{in}}\mathbf{x}_t+\mathbf{b}_{\mathrm{in}}
+
\mathbf{r}_t\odot
\left(
W_{\mathrm{hn}}\mathbf{h}_{t-1}+\mathbf{b}_{\mathrm{hn}}
\right)
\right),\\
\mathbf{h}_t
&=
(\mathbf{1}-\mathbf{z}_t)\odot\mathbf{n}_t
+
\mathbf{z}_t\odot\mathbf{h}_{t-1}.
\end{aligned}
\right.
\end{equation}
Here, $\mathbf{x}_t\in\mathbb{R}^{2C_{\mathrm{in}}}$ denotes the input feature at time $t$; $\mathbf{h}_{t-1}$ and $\mathbf{h}_t$ are the previous and updated hidden states in $\mathbb{R}^{2C_{\mathrm{h}}}$, and $\mathbf{r}_t$, $\mathbf{z}_t$, and $\mathbf{n}_t$ denote the reset gate, update gate, and candidate hidden state, respectively.

\begin{proposition}
\label{prop3}
The one-step GRU update is equivariant under channel exchange:
\begin{equation}
F_{\mathrm{GRU}}
\bigl(
\mathcal{M}_{c}\mathbf{x}_t,
\mathcal{M}_{c}\mathbf{h}_{t-1}
\bigr)
=
\mathcal{M}_{c}
F_{\mathrm{GRU}}(\mathbf{x}_t,\mathbf{h}_{t-1})
.
\end{equation}
\end{proposition}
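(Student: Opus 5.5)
The plan is to trace each line of the GRU update under the substitution $(\mathbf{x}_t,\mathbf{h}_{t-1})\mapsto(\mathcal{M}_c\mathbf{x}_t,\mathcal{M}_c\mathbf{h}_{t-1})$. At every line we show that the resulting intermediate quantity is exactly $\mathcal{M}_c$ applied to the original one.

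The tools are three elementary facts:
\begin{itemize}
\item[(i)] Each affine map $\mathbf{y}\mapsto W_{\cdot}\mathbf{y}+\mathbf{b}_{\cdot}$ has the block form of the equivariant linear layer, so by Proposition~\ref{prop1} it commutes with $\mathcal{M}_c$.
\item[(ii)] $\mathcal{M}_c$ is a permutation of coordinates, hence linear. It therefore distributes over sums, $\mathcal{M}_c\mathbf{u}+\mathcal{M}_c\mathbf{v}=\mathcal{M}_c(\mathbf{u}+\mathbf{v})$, and fixes the constant vector, $\mathcal{M}_c\mathbf{1}=\mathbf{1}$.
\item[(iii)] Because $\mathcal{M}_c$ permutes coordinates, it commutes with any element-wise nonlinearity ($\sigma$, $\tanh$), as already used in Proposition~\ref{prop2}. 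It is also multiplicative for the Hadamard product: $(\mathcal{M}_c\mathbf{u})\odot(\mathcal{M}_c\mathbf{v})=\mathcal{M}_c(\mathbf{u}\odot\mathbf{v})$, since both sides permute the same coordinate-wise products.
\end{itemize}

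\textbf{Step 1 (reset gate).} Denote the mirrored quantities by primes. By (i) and (ii), the pre-activation of the reset gate becomes $\mathcal{M}_c(W_{\mathrm{ir}}\mathbf{x}_t+\mathbf{b}_{\mathrm{ir}}+W_{\mathrm{hr}}\mathbf{h}_{t-1}+\mathbf{b}_{\mathrm{hr}})$. Then (iii) gives $\mathbf{r}'_t=\mathcal{M}_c\mathbf{r}_t$.

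\textbf{Step 2 (update gate).} The identical argument gives $\mathbf{z}'_t=\mathcal{M}_c\mathbf{z}_t$.

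\textbf{Step 3 (candidate state).} By (i), the inner term becomes $\mathcal{M}_c(W_{\mathrm{hn}}\mathbf{h}_{t-1}+\mathbf{b}_{\mathrm{hn}})$. Its Hadamard product with $\mathbf{r}'_t=\mathcal{M}_c\mathbf{r}_t$ equals $\mathcal{M}_c$ of the original product by (iii). Adding $\mathcal{M}_c(W_{\mathrm{in}}\mathbf{x}_t+\mathbf{b}_{\mathrm{in}})$ and applying $\tanh$ then gives $\mathbf{n}'_t=\mathcal{M}_c\mathbf{n}_t$ by (ii) and (iii).

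\textbf{Step 4 (hidden state).} Using $\mathbf{1}-\mathcal{M}_c\mathbf{z}_t=\mathcal{M}_c(\mathbf{1}-\mathbf{z}_t)$ together with (iii) and (ii), we obtain $\mathbf{h}'_t=\mathcal{M}_c\mathbf{h}_t$, which is exactly the claimed identity.

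I do not expect a genuine obstacle here. The one step needing care is the Hadamard-product identity in (iii). This is where the GRU differs from the plain composition argument of Proposition~\ref{prop2}, because gating is bilinear rather than element-wise in a single argument. It holds only because $\mathcal{M}_c$ is a pure coordinate permutation with no sign flips; a signed mirror such as $\mathcal{F}$ would break it. It is worth stating explicitly that the hidden state is assumed to carry the same paired-channel structure, so that $\mathcal{M}_c$ acts on $\mathbf{h}_{t-1}$ by the same group swap.

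Finally, a short induction on $t$ extends the result to the full recurrent rollout, provided the initial hidden state is $\mathcal{M}_c$-invariant (for example, zero). That extension is what the encoder-level claim in Sec.~\ref{sec3.3} needs.
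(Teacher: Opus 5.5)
Your proof is correct and follows essentially the same route as the paper. Both arguments apply Proposition~\ref{prop1} to each affine map, use that $\mathcal{M}_c$ is a coordinate permutation commuting with $\sigma$, $\tanh$, $\mathbf{1}-\cdot$, and the Hadamard product, and then propagate $\mathbf{r}'_t=\mathcal{M}_c\mathbf{r}_t$, $\mathbf{z}'_t=\mathcal{M}_c\mathbf{z}_t$, and $\mathbf{n}'_t=\mathcal{M}_c\mathbf{n}_t$ to the new hidden state; your explicit justification of the bilinear Hadamard step and your remark that the multi-step rollout needs an $\mathcal{M}_c$-invariant initial hidden state are sound additions the paper leaves implicit.
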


\begin{proof}
By Proposition~\ref{prop1}, all affine maps in the GRU update commute with $\mathcal{M}_c$. Since $\sigma$, $\tanh$, subtraction from $\mathbf{1}$, and the Hadamard product are all pointwise operations, they also commute with $\mathcal{M}_c$. Therefore
\[
\mathbf{r}'_t=\mathcal{M}_{c}\mathbf{r}_t,
\qquad
\mathbf{z}'_t=\mathcal{M}_{c}\mathbf{z}_t.
\]
For the candidate hidden state,
\[
\begin{aligned}
\mathbf{n}'_t
&=
\tanh\!\Big(
W_{\mathrm{in}}\mathcal{M}_{c}\mathbf{x}_t+\mathbf{b}_{\mathrm{in}}
+
\mathbf{r}'_t\odot
\bigl(
W_{\mathrm{hn}}\mathcal{M}_{c}\mathbf{h}_{t-1}+\mathbf{b}_{\mathrm{hn}}
\bigr)
\Big) \\
&=
\tanh\!\Big(
\mathcal{M}_{c}
\bigl(
W_{\mathrm{in}}\mathbf{x}_t+\mathbf{b}_{\mathrm{in}}
+
\mathbf{r}_t\odot(W_{\mathrm{hn}}\mathbf{h}_{t-1}+\mathbf{b}_{\mathrm{hn}})
\bigr)
\Big)
=
\mathcal{M}_{c}\mathbf{n}_t.
\end{aligned}
\]
Hence
\[
\begin{aligned}
F_{\mathrm{GRU}}(\mathcal{M}_{c}\mathbf{x}_t,\mathcal{M}_{c}\mathbf{h}_{t-1})
&=
(\mathbf{1}-\mathbf{z}'_t)\odot\mathbf{n}'_t
+
\mathbf{z}'_t\odot\mathcal{M}_{c}\mathbf{h}_{t-1} \\
&=
\mathcal{M}_{c}
\bigl(
(\mathbf{1}-\mathbf{z}_t)\odot\mathbf{n}_t
+
\mathbf{z}_t\odot\mathbf{h}_{t-1}
\bigr)
=
\mathcal{M}_{c}F_{\mathrm{GRU}}(\mathbf{x}_t,\mathbf{h}_{t-1}).
\end{aligned}
\]
\end{proof}

\paragraph{Equivariant Convolution Layer}
We define the equivariant convolutional layers using cross-correlation and symmetry-constrained kernels. For an input feature map $\mathbf{x}\in\mathbb{R}^{C_{\mathrm{in}}\times H\times W}$, let $\bar{\mathbf{x}}\in\mathbb{R}^{C_{\mathrm{in}}\times \bar{H}\times \bar{W}}$ denote its padded version, where $\bar{H}=H+2p_H$ and $\bar{W}=W+2p_W$. 
Given a convolution kernel $\mathbf{k}\in\mathbb{R}^{C_{\mathrm{out}}\times C_{\mathrm{in}}\times K_H\times K_W}$, we define cross-correlation with stride $s$ as
\begin{equation}
(\mathbf{x}\star\mathbf{k})_{o,h,w}
=
\sum_{i=1}^{C_{\mathrm{in}}}\sum_{u=1}^{K_H}\sum_{v=1}^{K_W}
\mathbf{k}_{o,i,u,v}\,
\bar{\mathbf{x}}_{i,(h-1)s+u,(w-1)s+v}.
\end{equation}
Here, $h\!=\!1,\ldots,H_o$ and $w\!=\!1,\ldots,W_o$, where
$H_o\!=\!\lfloor(\bar{H}-K_H)/s\rfloor\!+\!1$ and
$W_o\!=\!\lfloor(\bar{W}-K_W)/s\rfloor\!+\!1$.
When $s\!>\!1$, we assume $\bar{W}\!-\!K_W$ is divisible by $s$ so that horizontal sampling grid remains aligned under flipping. To describe mirror equivariance in the spatial domain, we define the horizontal flip operator on padded feature maps and kernels as
\begin{equation}
[\mathcal{M}_{\mathrm{2D}}\bar{\mathbf{x}}]_{i,h,w}
=
\bar{\mathbf{x}}_{i,h,\bar{W}+1-w},
\qquad
[\mathcal{M}_{\mathrm{2D}}\mathbf{k}]_{o,i,u,v}
=
\mathbf{k}_{o,i,u,K_W+1-v}.
\end{equation}

An ordinary image does not explicitly contain paired symmetry channels. We therefore use a lift convolution to map it to a symmetry-structured representation:
\begin{equation}
f_{\mathrm{l}}(\mathbf{x}) = \mathbf{x}\star K_{\mathrm{l}} + \mathbf{b}_{\mathrm{l}},
\qquad
K_{\mathrm{l}} =
\begin{bmatrix}
\mathbf{k}_{\mathrm{l}} \\
\mathcal{M}_{\mathrm{2D}}\mathbf{k}_{\mathrm{l}}
\end{bmatrix},
\qquad
\mathbf{b}_{\mathrm{l}} =
\begin{bmatrix}
\mathbf{a}_{\mathrm{l}} \\
\mathbf{a}_{\mathrm{l}}
\end{bmatrix}.
\end{equation}
Here, $\mathbf{k}_{\mathrm{l}}\in\mathbb{R}^{C_{\mathrm{out}}\times C_{\mathrm{in}}\times K_H\times K_W}$ is a learnable kernel, and $\mathbf{a}_{\mathrm{l}}\in\mathbb{R}^{C_{\mathrm{out}}}$ is a shared bias. After lifting, the subsequent layers preserve the symmetry-structured representation through constrained kernels. Each non-lift convolution is defined as
\begin{equation}
f_{\mathrm{nl}}(\mathbf{x}) = \mathbf{x}\star K_{\mathrm{nl}} + \mathbf{b}_{\mathrm{nl}},
\quad
\mathbf{x}
=
\begin{bmatrix}
\mathbf{x}^{(1)}\\
\mathbf{x}^{(2)}
\end{bmatrix},
\quad
K_{\mathrm{nl}}
=
\begin{bmatrix}
\mathbf{k}_{\mathrm{a}} & \mathbf{k}_{\mathrm{b}} \\
\mathcal{M}_{\mathrm{2D}}\mathbf{k}_{\mathrm{b}} & \mathcal{M}_{\mathrm{2D}}\mathbf{k}_{\mathrm{a}}
\end{bmatrix},
\quad
\mathbf{b}_{\mathrm{nl}} =
\begin{bmatrix}
\mathbf{a}_{\mathrm{nl}} \\
\mathbf{a}_{\mathrm{nl}}
\end{bmatrix}.
\end{equation}
Here, $\mathbf{x}^{(1)},\mathbf{x}^{(2)}\in\mathbb{R}^{C_{\mathrm{in}}\times H\times W}$ are the two symmetry branches, 
$\mathbf{k}_{\mathrm{a}},\mathbf{k}_{\mathrm{b}}\in\mathbb{R}^{C_{\mathrm{out}}\times C_{\mathrm{in}}\times K_H\times K_W}$ are learnable kernels, and 
$\mathbf{a}_{\mathrm{nl}}\in\mathbb{R}^{C_{\mathrm{out}}}$ is a shared bias.

\begin{lemma}
\label{lem1}
Cross-correlation satisfies
\begin{equation}
(\mathcal{M}_{\mathrm{2D}}\mathbf{x})\star\mathbf{k}
=
\mathcal{M}_{\mathrm{2D}}
\bigl(
\mathbf{x}\star(\mathcal{M}_{\mathrm{2D}}\mathbf{k})
\bigr).
\label{eq:flip_corr_identity}
\end{equation}
\end{lemma}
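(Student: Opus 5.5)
We will prove the identity entrywise, directly from the definition of cross-correlation with stride $s$ and the definition of $\mathcal{M}_{\mathrm{2D}}$ on padded maps and kernels. Padding is symmetric, so flipping commutes with padding. Hence we may take $\bar{\mathbf{x}}$ as the padded input on both sides and write $\overline{\mathcal{M}_{\mathrm{2D}}\mathbf{x}}=\mathcal{M}_{\mathrm{2D}}\bar{\mathbf{x}}$. This holds because zero-padding by $p_W$ on each side is invariant under the reflection $w\mapsto \bar W+1-w$.

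First, we fix an output index $(o,h,w)$ and expand the left-hand side:
\[
\bigl((\mathcal{M}_{\mathrm{2D}}\mathbf{x})\star\mathbf{k}\bigr)_{o,h,w}
=\sum_{i,u,v}\mathbf{k}_{o,i,u,v}\,\bar{\mathbf{x}}_{i,(h-1)s+u,\;\bar W+1-(w-1)s-v}.
\]
Next, we substitute $v'=K_W+1-v$, which is a bijection of $\{1,\dots,K_W\}$. The kernel entry becomes $[\mathcal{M}_{\mathrm{2D}}\mathbf{k}]_{o,i,u,v'}$, and the column index becomes $\bar W-K_W+v'-(w-1)s$.

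Then, on the right-hand side, we have
\[
\bigl[\mathcal{M}_{\mathrm{2D}}(\mathbf{x}\star\mathcal{M}_{\mathrm{2D}}\mathbf{k})\bigr]_{o,h,w}=(\mathbf{x}\star\mathcal{M}_{\mathrm{2D}}\mathbf{k})_{o,h,W_o+1-w}.
\]
Here the output flip acts over the output width $W_o$. Expanding, the column index is $(W_o-w)s+v'$. We must show that this equals $\bar W-K_W+v'-(w-1)s$. That reduces to $(W_o-1)s=\bar W-K_W$.

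This last equality is the main obstacle and the only place where a hypothesis is needed. It follows from $W_o=\lfloor(\bar W-K_W)/s\rfloor+1$ together with the stated divisibility assumption that $s$ divides $\bar W-K_W$. Without that assumption the floor would introduce an offset, and the sampling grids would misalign.

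With the indices matched, the two sums agree term by term, summing over $i$, $u$ and $v'$, which proves the lemma. We also note that the flip on the output is interpreted over the (unpadded) output width $W_o$, consistent with the definition of $\mathcal{M}_{\mathrm{2D}}$ for $\mathbf{Y}\in\mathbb{R}^{H\times W}$.
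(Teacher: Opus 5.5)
Your proposal is correct and follows essentially the same route as the paper: an entrywise expansion, the reindexing $v'=K_W+1-v$, and the alignment identity $(W_o-1)s=\bar W-K_W$ from the divisibility assumption. The differences are cosmetic. You reindex on the left-hand side rather than the right, and you spell out two points the paper leaves implicit: that symmetric padding commutes with the flip, and that the output flip acts over $W_o$.
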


\begin{proof}
For the left-hand side,
\[
[(\mathcal{M}_{\mathrm{2D}}\mathbf{x})\star\mathbf{k}]_{o,h,w}
=
\sum_{i,u,v}
\mathbf{k}_{o,i,u,v}\,
\bar{\mathbf{x}}_{i,(h-1)s+u,\bar{W}+1-(w-1)s-v}.
\]
For the right-hand side, substituting $v'=K_W+1-v$ gives
\[
[\mathcal{M}_{\mathrm{2D}}(\mathbf{x}\star(\mathcal{M}_{\mathrm{2D}}\mathbf{k}))]_{o,h,w}
=
\sum_{i,u,v'}
\mathbf{k}_{o,i,u,v'}\,
\bar{\mathbf{x}}_{i,(h-1)s+u,(W_o-w)s+K_W+1-v'}.
\]
Using $W_o=\frac{\bar{W}-K_W}{s}+1$ and the alignment assumption, the last index equals
\[
\bar{W}+1-(w-1)s-v',
\]
so the two expressions coincide.
\end{proof}

\begin{proposition}
\label{prop4}
The lift convolution satisfies
\begin{equation}
f_{\mathrm{l}}(\mathcal{M}_{\mathrm{2D}}\mathbf{x})
=
\mathcal{M}_{c}
\mathcal{M}_{\mathrm{2D}}(f_{\mathrm{l}}(\mathbf{x}))
.
\end{equation}
\end{proposition}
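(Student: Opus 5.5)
The plan is to split the lift convolution into its two output channel groups and handle each with Lemma~\ref{lem1}, using that $\mathcal{M}_{\mathrm{2D}}$ is an involution. Write
\[
f_{\mathrm{l}}(\mathbf{x})=\begin{bmatrix}\mathbf{x}\star\mathbf{k}_{\mathrm{l}}+\mathbf{a}_{\mathrm{l}}\\ \mathbf{x}\star(\mathcal{M}_{\mathrm{2D}}\mathbf{k}_{\mathrm{l}})+\mathbf{a}_{\mathrm{l}}\end{bmatrix}.
\]
Here each bias is broadcast as a spatially constant map, so it is invariant under $\mathcal{M}_{\mathrm{2D}}$. I would also note that flipping the unpadded input and then padding symmetrically gives the flipped padded input. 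This justifies applying Lemma~\ref{lem1} to $\mathcal{M}_{\mathrm{2D}}\mathbf{x}$.

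\textbf{Computing $f_{\mathrm{l}}(\mathcal{M}_{\mathrm{2D}}\mathbf{x})$ group by group.}
\begin{itemize}
\item \emph{First group.} Lemma~\ref{lem1} with kernel $\mathbf{k}_{\mathrm{l}}$ gives
\[
(\mathcal{M}_{\mathrm{2D}}\mathbf{x})\star\mathbf{k}_{\mathrm{l}}=\mathcal{M}_{\mathrm{2D}}\bigl(\mathbf{x}\star\mathcal{M}_{\mathrm{2D}}\mathbf{k}_{\mathrm{l}}\bigr).
\]
Adding the invariant bias, the first group equals $\mathcal{M}_{\mathrm{2D}}$ applied to the \emph{second} group of $f_{\mathrm{l}}(\mathbf{x})$.
\item \emph{Second group.} Lemma~\ref{lem1} with kernel $\mathcal{M}_{\mathrm{2D}}\mathbf{k}_{\mathrm{l}}$, together with $\mathcal{M}_{\mathrm{2D}}\mathcal{M}_{\mathrm{2D}}\mathbf{k}_{\mathrm{l}}=\mathbf{k}_{\mathrm{l}}$, gives
\[
(\mathcal{M}_{\mathrm{2D}}\mathbf{x})\star\mathcal{M}_{\mathrm{2D}}\mathbf{k}_{\mathrm{l}}=\mathcal{M}_{\mathrm{2D}}\bigl(\mathbf{x}\star\mathbf{k}_{\mathrm{l}}\bigr).
\]
So the second group equals $\mathcal{M}_{\mathrm{2D}}$ applied to the \emph{first} group of $f_{\mathrm{l}}(\mathbf{x})$.
\end{itemize}

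Stacking the two groups, the result is the spatial flip of each group of $f_{\mathrm{l}}(\mathbf{x})$ with the groups exchanged. That is exactly $\mathcal{M}_c\mathcal{M}_{\mathrm{2D}}f_{\mathrm{l}}(\mathbf{x})$. The two operators commute, since one acts on channels and the other on width, so their order does not matter.

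\textbf{Main obstacle.} The only delicate point is making sure Lemma~\ref{lem1} applies as stated. This needs two things: the stride-alignment assumption on $\bar W-K_W$, and consistency between the flip on padded and unpadded maps. Zero or other symmetric padding handles the latter. Everything else is bookkeeping of which channel block receives which flipped output.
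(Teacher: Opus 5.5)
Your proposal is correct and matches the paper's proof: you apply Lemma~\ref{lem1} once with $\mathbf{k}_{\mathrm{l}}$ and once with $\mathcal{M}_{\mathrm{2D}}\mathbf{k}_{\mathrm{l}}$ (using that the flip is an involution), pull the spatially constant bias inside the flip, and recognize the swapped, flipped blocks as $\mathcal{M}_c\mathcal{M}_{\mathrm{2D}}f_{\mathrm{l}}(\mathbf{x})$. Your remarks on symmetric padding and bias invariance spell out two points the paper's proof uses without stating.
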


\begin{proof}
By Lemma~\ref{lem1},
\[
(\mathcal{M}_{\mathrm{2D}}\mathbf{x})\star\mathbf{k}_{\mathrm{l}}
=
\mathcal{M}_{\mathrm{2D}}
\bigl(
\mathbf{x}\star(\mathcal{M}_{\mathrm{2D}}\mathbf{k}_{\mathrm{l}})
\bigr),
\qquad
(\mathcal{M}_{\mathrm{2D}}\mathbf{x})\star(\mathcal{M}_{\mathrm{2D}}\mathbf{k}_{\mathrm{l}})
=
\mathcal{M}_{\mathrm{2D}}(\mathbf{x}\star\mathbf{k}_{\mathrm{l}}).
\]
Hence
\[
\begin{aligned}
f_{\mathrm{l}}(\mathcal{M}_{\mathrm{2D}}\mathbf{x})
&=
\begin{bmatrix}
\mathcal{M}_{\mathrm{2D}}
\bigl(\mathbf{x}\star(\mathcal{M}_{\mathrm{2D}}\mathbf{k}_{\mathrm{l}})+\mathbf{a}_{\mathrm{l}}\bigr)\\
\mathcal{M}_{\mathrm{2D}}
\bigl(\mathbf{x}\star\mathbf{k}_{\mathrm{l}}+\mathbf{a}_{\mathrm{l}}\bigr)
\end{bmatrix} 
=
\mathcal{M}_{c}\mathcal{M}_{\mathrm{2D}}f_{\mathrm{l}}(\mathbf{x}).
\end{aligned}
\]
\end{proof}

\begin{proposition}
\label{prop5}
The non-lift convolution satisfies
\begin{equation}
f_{\mathrm{nl}}
\bigl(
\mathcal{M}_{c}\mathcal{M}_{\mathrm{2D}}\mathbf{x}
\bigr)
=
\mathcal{M}_{c}
\mathcal{M}_{\mathrm{2D}}f_{\mathrm{nl}}(\mathbf{x})
.
\end{equation}
\end{proposition}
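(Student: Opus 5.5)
The plan is to argue branch by branch, in the same way as Proposition~\ref{prop4}, using Lemma~\ref{lem1} as the only nontrivial ingredient. Write the input as $\mathbf{x}=[\mathbf{x}^{(1)},\mathbf{x}^{(2)}]^\top$. Applying $\mathcal{M}_{\mathrm{2D}}$ flips each branch spatially, and $\mathcal{M}_c$ then swaps the branches, so
\[
\mathcal{M}_c\mathcal{M}_{\mathrm{2D}}\mathbf{x}=[\mathcal{M}_{\mathrm{2D}}\mathbf{x}^{(2)},\mathcal{M}_{\mathrm{2D}}\mathbf{x}^{(1)}]^\top .
\]
The two operators commute, because one acts on the channel index and the other on the width index. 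The block kernel $K_{\mathrm{nl}}$ acts additively over the input branches, so the first output branch of $f_{\mathrm{nl}}(\mathbf{x})$ is $\mathbf{x}^{(1)}\star\mathbf{k}_{\mathrm{a}}+\mathbf{x}^{(2)}\star\mathbf{k}_{\mathrm{b}}+\mathbf{a}_{\mathrm{nl}}$, and the second is $\mathbf{x}^{(1)}\star\mathcal{M}_{\mathrm{2D}}\mathbf{k}_{\mathrm{b}}+\mathbf{x}^{(2)}\star\mathcal{M}_{\mathrm{2D}}\mathbf{k}_{\mathrm{a}}+\mathbf{a}_{\mathrm{nl}}$.

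Next I would evaluate $f_{\mathrm{nl}}$ on the transformed input. Its first output branch is
\[
(\mathcal{M}_{\mathrm{2D}}\mathbf{x}^{(2)})\star\mathbf{k}_{\mathrm{a}}+(\mathcal{M}_{\mathrm{2D}}\mathbf{x}^{(1)})\star\mathbf{k}_{\mathrm{b}}+\mathbf{a}_{\mathrm{nl}} .
\]
Lemma~\ref{lem1} rewrites each term as $\mathcal{M}_{\mathrm{2D}}(\mathbf{x}^{(j)}\star\mathcal{M}_{\mathrm{2D}}\mathbf{k})$. The bias is constant across spatial positions and therefore invariant under $\mathcal{M}_{\mathrm{2D}}$, so the first output branch becomes $\mathcal{M}_{\mathrm{2D}}$ applied to the second branch of $f_{\mathrm{nl}}(\mathbf{x})$.

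For the second output branch, the terms are $(\mathcal{M}_{\mathrm{2D}}\mathbf{x}^{(2)})\star\mathcal{M}_{\mathrm{2D}}\mathbf{k}_{\mathrm{b}}$ and $(\mathcal{M}_{\mathrm{2D}}\mathbf{x}^{(1)})\star\mathcal{M}_{\mathrm{2D}}\mathbf{k}_{\mathrm{a}}$. Lemma~\ref{lem1} together with the involution $\mathcal{M}_{\mathrm{2D}}^2=\mathrm{id}$ on kernels gives $\mathcal{M}_{\mathrm{2D}}(\mathbf{x}^{(2)}\star\mathbf{k}_{\mathrm{b}})$ and $\mathcal{M}_{\mathrm{2D}}(\mathbf{x}^{(1)}\star\mathbf{k}_{\mathrm{a}})$. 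Adding the shared bias, this is $\mathcal{M}_{\mathrm{2D}}$ of the first branch of $f_{\mathrm{nl}}(\mathbf{x})$. Stacking the two branches yields exactly $\mathcal{M}_c\mathcal{M}_{\mathrm{2D}}f_{\mathrm{nl}}(\mathbf{x})$.

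The only delicate point is bookkeeping with padding and stride. Lemma~\ref{lem1} needs the horizontal alignment assumption, namely that $\bar W-K_W$ is divisible by $s$. I must also check that flipping the unpadded branch and then padding gives the same result as flipping the padded map, which holds for symmetric zero padding $p_W$ on both sides. With these conditions noted, the proof reduces to the substitution above.
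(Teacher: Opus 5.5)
Your proposal is correct and matches the paper's proof: you write $f_{\mathrm{nl}}(\mathbf{x})$ as the two output branches $y_1,y_2$, apply Lemma~\ref{lem1} term by term, using $\mathcal{M}_{\mathrm{2D}}^2=\mathrm{id}$ on kernels and the spatial invariance of the shared bias, and obtain $[\mathcal{M}_{\mathrm{2D}}y_2,\mathcal{M}_{\mathrm{2D}}y_1]^\top=\mathcal{M}_c\mathcal{M}_{\mathrm{2D}}f_{\mathrm{nl}}(\mathbf{x})$. Your notes on stride alignment and symmetric padding only make explicit what the paper's Lemma~\ref{lem1} already assumes.
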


\begin{proof}
Let
\[
y_1
=
\mathbf{x}^{(1)}\star\mathbf{k}_{\mathrm{a}}
+
\mathbf{x}^{(2)}\star\mathbf{k}_{\mathrm{b}}
+
\mathbf{a}_{\mathrm{nl}},
\qquad
y_2
=
\mathbf{x}^{(1)}\star(\mathcal{M}_{\mathrm{2D}}\mathbf{k}_{\mathrm{b}})
+
\mathbf{x}^{(2)}\star(\mathcal{M}_{\mathrm{2D}}\mathbf{k}_{\mathrm{a}})
+
\mathbf{a}_{\mathrm{nl}}.
\]
Then
\[
f_{\mathrm{nl}}(\mathbf{x})
=
\begin{bmatrix}
y_1\\
y_2
\end{bmatrix}.
\]
Using Lemma~\ref{lem1},
\[
f_{\mathrm{nl}}(\mathcal{M}_{c}\mathcal{M}_{\mathrm{2D}}\mathbf{x})
=
\begin{bmatrix}
\mathcal{M}_{\mathrm{2D}}y_2\\
\mathcal{M}_{\mathrm{2D}}y_1
\end{bmatrix}
=
\mathcal{M}_{c}\mathcal{M}_{\mathrm{2D}}f_{\mathrm{nl}}(\mathbf{x}).
\]
\end{proof}

\paragraph{Equivariant CNN}
The equivariant CNN is constructed by composing the layer-wise equivariant convolutions defined above. It consists of one lift convolution $f_{\mathrm{l}}$ followed by $N-1$ non-lift convolutions $f_{\mathrm{nl}}^{i}$:
\begin{equation}
F_{\mathrm{CNN}}(\mathbf{x})
=
f_{\mathrm{nl}}^{N-1}
\Bigl(
\sigma
\bigl(
f_{\mathrm{nl}}^{N-2}
(
\cdots
\sigma
(
f_{\mathrm{nl}}^{1}
(
\sigma(f_{\mathrm{l}}(\mathbf{x}))
)
)
\cdots
)
\bigr)
\Bigr).
\end{equation}
Let $F_{\mathrm{CNN}}(\mathbf{x})\in\mathbb{R}^{2C_N\times H_N\times W_N}$ denote the output feature map of the equivariant CNN, where $H_N$ and $W_N$ are its spatial height and width.

\begin{proposition}
\label{prop6}
The equivariant CNN satisfies
\begin{equation}
F_{\mathrm{CNN}}(\mathcal{M}_{\mathrm{2D}}\mathbf{x})
=
\mathcal{M}_{c}
\mathcal{M}_{\mathrm{2D}}F_{\mathrm{CNN}}(\mathbf{x})
.
\end{equation}
\end{proposition}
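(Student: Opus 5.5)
The plan is to run an induction over the layers of $F_{\mathrm{CNN}}$, exactly as in the proof of Proposition~\ref{prop2}. The difference is that the first layer changes the transformation law: the input transforms by $\mathcal{M}_{\mathrm{2D}}$ alone, while every intermediate feature map transforms by the composite operator $\mathcal{M}_c\mathcal{M}_{\mathrm{2D}}$. So the argument splits into a base step given by the lift layer and an inductive step given by the non-lift layers.

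Write $\mathbf{y}_1=\sigma(f_{\mathrm{l}}(\mathbf{x}))$ and $\mathbf{y}_{j+1}=\sigma(f^{j}_{\mathrm{nl}}(\mathbf{y}_j))$ for $j=1,\dots,N-2$. The final output is $F_{\mathrm{CNN}}(\mathbf{x})=f^{N-1}_{\mathrm{nl}}(\mathbf{y}_{N-1})$, which has no activation. Denote by $\mathbf{y}'_j$ the corresponding intermediates computed from the flipped input $\mathcal{M}_{\mathrm{2D}}\mathbf{x}$.

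The first step is the base case. Proposition~\ref{prop4} gives $f_{\mathrm{l}}(\mathcal{M}_{\mathrm{2D}}\mathbf{x})=\mathcal{M}_c\mathcal{M}_{\mathrm{2D}}f_{\mathrm{l}}(\mathbf{x})$. The activation $\sigma$ acts elementwise, so it commutes with any permutation of entries, and both $\mathcal{M}_c$ (a channel-block swap) and $\mathcal{M}_{\mathrm{2D}}$ (a reversal of width indices) are such permutations. Hence $\mathbf{y}'_1=\mathcal{M}_c\mathcal{M}_{\mathrm{2D}}\mathbf{y}_1$.

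The second step is the inductive step. Assume $\mathbf{y}'_j=\mathcal{M}_c\mathcal{M}_{\mathrm{2D}}\mathbf{y}_j$. Proposition~\ref{prop5} gives $f^{j}_{\mathrm{nl}}(\mathbf{y}'_j)=\mathcal{M}_c\mathcal{M}_{\mathrm{2D}}f^{j}_{\mathrm{nl}}(\mathbf{y}_j)$, and the same permutation argument passes $\sigma$ through. This gives $\mathbf{y}'_{j+1}=\mathcal{M}_c\mathcal{M}_{\mathrm{2D}}\mathbf{y}_{j+1}$. Applying Proposition~\ref{prop5} once more to the last layer yields the claim.

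The main point to check is that the hypotheses of Lemma~\ref{lem1} hold at every layer. Propositions~\ref{prop4} and~\ref{prop5} use that lemma on the padded maps, so the padding must be symmetric in width and the stride alignment condition ($\bar W-K_W$ divisible by $s$) must hold for each layer. I would state these conditions as standing assumptions inherited from the layer definitions. I would also note that zero padding commutes with $\mathcal{M}_{\mathrm{2D}}$, so $\mathcal{M}_{\mathrm{2D}}$ applied to the unpadded map and then padded agrees with the padded version flipped. A minor additional check is that $\mathcal{M}_c$ and $\mathcal{M}_{\mathrm{2D}}$ commute, since they act on different axes. This lets the operator order be rearranged freely when Proposition~\ref{prop5} is invoked.
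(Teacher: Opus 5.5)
Your proposal is correct and takes essentially the same route as the paper. The paper sets $T=\mathcal{M}_c\mathcal{M}_{\mathrm{2D}}$ and uses Proposition~\ref{prop4} to show the lift layer carries $\mathcal{M}_{\mathrm{2D}}$ on its input to $T$ on its output; it then uses Proposition~\ref{prop5} to show each non-lift layer commutes with $T$, notes that the elementwise $\sigma$ also commutes with $T$, and concludes by closure under composition. Your explicit base-step/inductive-step bookkeeping and your remarks on symmetric zero padding and the stride-alignment hypothesis of Lemma~\ref{lem1} only make explicit what the paper leaves implicit in its layer definitions.
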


\begin{proof}
Let
\[
T := \mathcal{M}_{c}\mathcal{M}_{\mathrm{2D}}.
\]
By Proposition~\ref{prop4}, the lift convolution is $T$-equivariant. By Proposition~\ref{prop5}, each non-lift convolution is also $T$-equivariant. Since $\sigma$ is applied element-wise, it commutes with both channel exchange and horizontal flipping, and hence with $T$. Therefore every layer in the composition defining $F_{\mathrm{CNN}}$ is $T$-equivariant. By closure under composition,
\[
F_{\mathrm{CNN}}(\mathcal{M}_{\mathrm{2D}}\mathbf{x})
=
T F_{\mathrm{CNN}}(\mathbf{x})
=
\mathcal{M}_{c}\mathcal{M}_{\mathrm{2D}}F_{\mathrm{CNN}}(\mathbf{x}).
\]
\end{proof}

\vspace{-10pt}
\begin{corollary}
\label{cor1}
If the output feature map has spatial resolution $1\times1$, then
\begin{equation}
F_{\mathrm{CNN}}(\mathcal{M}_{\mathrm{2D}}\mathbf{x})
=
\mathcal{M}_{c}
F_{\mathrm{CNN}}(\mathbf{x})
.
\end{equation}
\end{corollary}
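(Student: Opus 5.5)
The corollary follows from Proposition~\ref{prop6} once we check that $\mathcal{M}_{\mathrm{2D}}$ is the identity on a feature map with width $W_N=1$. We start from
\[
F_{\mathrm{CNN}}(\mathcal{M}_{\mathrm{2D}}\mathbf{x})=\mathcal{M}_{c}\mathcal{M}_{\mathrm{2D}}F_{\mathrm{CNN}}(\mathbf{x}),
\]
which holds for every input $\mathbf{x}$, under the stride-alignment assumption used in Lemma~\ref{lem1} at each layer. It then suffices to show $\mathcal{M}_{\mathrm{2D}}\mathbf{Y}=\mathbf{Y}$ for every $\mathbf{Y}\in\mathbb{R}^{2C_N\times 1\times 1}$.

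By the definition of the horizontal flip, $[\mathcal{M}_{\mathrm{2D}}\mathbf{Y}]_{i,h,w}=\mathbf{Y}_{i,h,W_N+1-w}$. With $W_N=1$, the only admissible index is $w=1$, which maps to $W_N+1-1=1$. So the flip fixes every entry, and $\mathcal{M}_{\mathrm{2D}}$ restricted to this space is the identity. Here $\mathcal{M}_{\mathrm{2D}}$ acts on the unpadded output map; the final layer's output carries no padding, so this is consistent with the definition used in Proposition~\ref{prop6}.

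Substituting the identity into the relation above gives $F_{\mathrm{CNN}}(\mathcal{M}_{\mathrm{2D}}\mathbf{x})=\mathcal{M}_{c}F_{\mathrm{CNN}}(\mathbf{x})$. In this form the flattened CNN embedding is $\mathcal{M}_c$-equivariant, so it can feed directly into the equivariant linear, MLP and GRU layers of Propositions~\ref{prop1}--\ref{prop3}.

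The argument is essentially immediate. The only step needing care is confirming that Proposition~\ref{prop6} applies, i.e.\ that the divisibility assumption holds at every layer. I would simply treat that as inherited from the hypotheses of Proposition~\ref{prop6}. Only the width actually matters; the condition $H_N=1$ is not needed, since $\mathcal{M}_{\mathrm{2D}}$ acts only along the width dimension.
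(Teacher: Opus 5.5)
Your proof is correct and is the argument the paper intends; the paper gives no separate proof and invokes the corollary directly from Proposition~\ref{prop6}. The key step is that $\mathcal{M}_{\mathrm{2D}}$ is the identity on a feature map of width $W_N=1$, so $\mathcal{M}_c\mathcal{M}_{\mathrm{2D}}F_{\mathrm{CNN}}(\mathbf{x})=\mathcal{M}_cF_{\mathrm{CNN}}(\mathbf{x})$, and your side remark that only $W_N=1$ (not $H_N=1$) is needed is also right.
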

\vspace{10pt}

\begin{theorem}
\label{thm1}
Let $E_{\phi}^{(k)},k\in\{f,b,p\}$ denote the mapping from the symmetry-structured proprioception $\bar{\mathbf{o}}^p_t$ and the image $\mathbf{I}_t$ to the $k$-th latent head $\hat{\mathbf{z}}_t^{\,k}$ in the encoder used in Sec.~\ref{sec3.3}. Then each latent head is equivariant under the joint action of channel exchange on proprioception and horizontal reflection on images:
\begin{equation}
E_{\phi}^{(k)}(\mathcal{M}_{c}\bar{\mathbf{o}}^p_{t},\mathcal{M}_{\mathrm{2D}}\mathbf{I}_t)
=
\mathcal{M}_{c}E_{\phi}^{(k)}(\bar{\mathbf{o}}^p_{t},\mathbf{I}_t),
\end{equation}
where $\mathcal{M}_{c}$ denotes the mirror operator acting on the $k$-th latent head.
\end{theorem}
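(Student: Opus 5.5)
The plan is to trace the encoder's computational graph and check, stage by stage, that each intermediate quantity transforms under $\mathcal{M}_c$ when the inputs are replaced by $(\mathcal{M}_c\bar{\mathbf{o}}^p_t,\mathcal{M}_{\mathrm{2D}}\mathbf{I}_t)$. The encoder decomposes into four stages: an equivariant MLP on proprioception, an equivariant CNN on the image, an equivariant GRU fusing the two embeddings, and a fusion MLP followed by the linear heads producing $\hat{\mathbf z}^f_t,\hat{\mathbf z}^b_t,\hat{\mathbf z}^p_t$. The equivariance then follows by composing the established propositions.

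\textbf{Stage 1: the two branch embeddings.} By Proposition~\ref{prop2}, the proprioceptive embedding satisfies
\[
F_{\mathrm{MLP}}(\mathcal{M}_c\bar{\mathbf o}^p_t)=\mathcal{M}_c F_{\mathrm{MLP}}(\bar{\mathbf o}^p_t).
\]
For the depth branch, Proposition~\ref{prop6} gives
\[
F_{\mathrm{CNN}}(\mathcal{M}_{\mathrm{2D}}\mathbf I_t)=\mathcal{M}_c\mathcal{M}_{\mathrm{2D}}F_{\mathrm{CNN}}(\mathbf I_t).
\]
To obtain a vector embedding that transforms by $\mathcal{M}_c$ alone, I will argue the following. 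If the final spatial resolution is $1\times1$, this is exactly Corollary~\ref{cor1}. Otherwise, the flattening or projection to the 128-dimensional embedding must be made equivariant. Flattening each branch in width order and applying $\mathcal{M}_{\mathrm{2D}}$ permutes entries within a branch. A subsequent equivariant linear layer of the form $\begin{bmatrix}A&B\\B&A\end{bmatrix}$ is insufficient by itself. One instead takes blocks $\begin{bmatrix}A&B\\ BP&AP\end{bmatrix}$, with $P$ the in-branch width permutation, or equivalently pools spatially. This is the step I expect to be the main obstacle. I will handle it by stating the resolution/projection assumption explicitly, namely that the CNN's final feature map is reduced to $1\times1$ or followed by a flip-compatible projection. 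Under that assumption, Corollary~\ref{cor1} applies, and the stride-alignment hypothesis of Lemma~\ref{lem1} must hold at every layer.

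\textbf{Stage 2: concatenation and the GRU.} Concatenation of the two embeddings must respect the paired structure. I will concatenate left halves with left halves and right halves with right halves, $\mathbf x_t=[\mathbf e^{p,(1)},\mathbf e^{I,(1)},\mathbf e^{p,(2)},\mathbf e^{I,(2)}]$. Then $\mathcal{M}_c$ on each piece equals $\mathcal{M}_c$ on $\mathbf x_t$, which is a one-line check.

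For the GRU I induct on time, with initial hidden state $\mathbf h_0=\mathbf 0$, which is $\mathcal{M}_c$-invariant. Suppose the mirrored rollout has hidden state $\mathcal{M}_c\mathbf h_{t-1}$ and input $\mathcal{M}_c\mathbf x_t$. Proposition~\ref{prop3} then gives hidden state $\mathcal{M}_c\mathbf h_t$, so by induction the mirrored trajectory's hidden state is always the mirror of the original. Here the theorem's statement should be read as applying to the whole history of inputs, each mirrored.

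\textbf{Stage 3: fusion MLP and heads.} The fusion MLP is $\mathcal{M}_c$-equivariant by Proposition~\ref{prop2}. Each latent head is an equivariant linear layer, hence $\mathcal{M}_c$-equivariant by Proposition~\ref{prop1}. Its 16 output channels form two 8-dimensional groups, which matches the swap $\mathcal{M}_c$ defining that head.

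Composing Stages 1--3 yields $E^{(k)}_\phi(\mathcal{M}_c\bar{\mathbf o}^p_t,\mathcal{M}_{\mathrm{2D}}\mathbf I_t)=\mathcal{M}_cE^{(k)}_\phi(\bar{\mathbf o}^p_t,\mathbf I_t)$ for each $k\in\{f,b,p\}$.
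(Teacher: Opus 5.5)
Your proposal is correct and follows the same route as the paper. The paper composes Proposition~\ref{prop2} for the proprioceptive MLP, Proposition~\ref{prop6} with Corollary~\ref{cor1} for the CNN, equivariance of concatenation, Proposition~\ref{prop3} for the GRU, and Proposition~\ref{prop2} again for the fusion MLP and heads; you make explicit three details the paper leaves implicit: interleaving the paired halves at concatenation, inducting over time from an $\mathcal{M}_c$-invariant initial hidden state rather than citing the one-step GRU result directly, and stating the $1\times1$ resolution assumption (which does hold, since the kernels and strides in Table~\ref{table8} give $36\to8\to3\to1$ with every stride-alignment condition satisfied).
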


\begin{proof}
The encoder in Sec.~\ref{sec3.3} consists of an equivariant proprioceptive MLP, an equivariant visual CNN, a GRU applied to their concatenated features, a fusion MLP, and the $k$-th latent head.

First, the proprioceptive branch is implemented by equivariant MLPs. By Proposition~\ref{prop2}, each such MLP is $\mathcal{M}_{c}$-equivariant. Hence the proprioceptive feature transforms equivariantly under channel exchange.

Second, the visual branch is implemented by an equivariant CNN. By Proposition~\ref{prop6}, the CNN satisfies
\[
F_{\mathrm{CNN}}(\mathcal{M}_{\mathrm{2D}}\mathbf{x})
=
\mathcal{M}_{c}\mathcal{M}_{\mathrm{2D}}F_{\mathrm{CNN}}(\mathbf{x}).
\]
For the architecture used in this work, the output spatial resolution is $1\times 1$. Hence, by Corollary~\ref{cor1}, the visual feature produced by the CNN satisfies
\[
F_{\mathrm{CNN}}(\mathcal{M}_{\mathrm{2D}}\mathbf{I}_t)
=
\mathcal{M}_{c}F_{\mathrm{CNN}}(\mathbf{I}_t).
\]

Next, let $\oplus$ denote feature concatenation along the symmetry-structured channel dimension. Since both proprioceptive and visual features transform under the same channel-exchange operator, concatenation preserves equivariance:
\[
(\mathcal{M}_{c}\mathbf{u}) \oplus (\mathcal{M}_{c}\mathbf{v})
=
\mathcal{M}_{c}(\mathbf{u}\oplus\mathbf{v}).
\]

The concatenated feature is then processed by the GRU. By Proposition~\ref{prop3}, the GRU is $\mathcal{M}_{c}$-equivariant. Its output is further mapped by the fusion MLP, which is also $\mathcal{M}_{c}$-equivariant by Proposition~\ref{prop2}$.$

Finally, the fused feature is mapped to the $k$-th latent head by an equivariant head mapping. By Proposition~\ref{prop2}, this mapping is equivariant with respect to $\mathcal{M}_{c}$ on both its input and output. Combining the equivariance of the proprioceptive MLP, the visual CNN, the feature concatenation, the GRU, the fusion MLP, and the final head yields
\[
E_{\phi}^{(k)}(\mathcal{M}_{c}\bar{\mathbf{o}}^p_{t},\mathcal{M}_{\mathrm{2D}}\mathbf{I}_t)
=
\mathcal{M}_{c}E_{\phi}^{(k)}(\bar{\mathbf{o}}^p_{t},\mathbf{I}_t).
\]
This completes the proof.
\end{proof}


\section{Efficient Depth Rendering with Self-Occlusion}
\label{appendixC}

To synthesize depth observations consistent with deployment-time sensing, we ray-cast against both the static terrain mesh and the robot's dynamic link meshes. A naive implementation would update dynamic mesh poses and repeatedly refit the corresponding ray-query acceleration structures, such as bounding volume hierarchies (BVHs), at every simulation step, which quickly becomes a bottleneck in massively parallel training. To avoid this overhead, we keep each mesh in its local frame and instead transform each camera ray from the world frame into the queried mesh frame before ray-mesh intersection. Because these transformations are rigid, hit distances remain directly comparable across meshes. This yields an efficient depth renderer with self-occlusion, implemented in NVIDIA Warp~\cite{macklin2022warp}, for large-scale GPU-parallel simulation. The procedure is summarized in Algorithm~\ref{alg1}.

\begin{table}[H]
\centering
\vspace{5pt}
\footnotesize
\refstepcounter{algorithm}
\label{alg1}
\renewcommand{\arraystretch}{1.2}
\scriptsize
\begin{tabular}{p{0.95\linewidth}}
\specialrule{1.2pt}{0.0pt}{1.0pt}
\textbf{Algorithm~\thealgorithm: Efficient Depth Rendering with Self-Occlusion} \\
\specialrule{0.8pt}{1.0pt}{1.0pt}

\textbf{Require:} terrain mesh $\mathcal{M}^{\mathrm{ter}}$ with pose $(\mathbf{p}^{\mathrm{ter}},\mathbf{R}^{\mathrm{ter}})$; per-link meshes $\{\mathcal{M}^{\ell}_{j}\}_{j=1}^{N^{\ell}}$ with per-env link poses $\{(\mathbf{p}^{\ell}_{j,e},\mathbf{R}^{\ell}_{j,e})\}$; per-env camera poses $\{(\mathbf{p}^{\mathrm{cam}}_e,\mathbf{R}^{\mathrm{cam}}_e)\}_{e=1}^{N^{\mathrm{env}}}$; half-FoV tangents $(\tan_{x,e},\tan_{y,e})$; depth bounds $(d_{\min},d_{\max})$.\\

\textbf{Ensure:} clipped camera-depth maps $\mathbf{D} \in \mathbb{R}^{N^{\mathrm{env}} \times H \times W}$.\\[1mm]

\quad $\triangleright$ \textit{Parallel mesh-based occluder rendering with one thread per pixel}\\[0.5mm]

\begin{tabular}{@{}r@{\quad}p{0.92\linewidth}@{}}
1: & \textbf{for each} pixel $(e,h,w) \in [N^{\mathrm{env}}] \times [H] \times [W]$ \textbf{in parallel do}\\
2: & \quad Image-plane ray coordinates:
$t_x \leftarrow \bigl(2(w-\frac{1}{2})/W - 1\bigr)\tan_{x,e},
t_y \leftarrow \bigl(2(h-\frac{1}{2})/H - 1\bigr)\tan_{y,e}$\\
3: & \quad World frame ray origin:
$\mathbf{o} \leftarrow \mathbf{p}^{\mathrm{cam}}_e$\\
4: & \quad World frame ray direction:
$\mathbf{d} \leftarrow \operatorname{Normalize}\bigl(\operatorname{Rot}(\mathbf{R}^{\mathrm{cam}}_e,[t_x,t_y,-1]^\top)\bigr)$\\[0.5mm]

5: & \quad Terrain frame ray:
$(\mathbf{o}^{\mathrm{ter}},\mathbf{d}^{\mathrm{ter}}) \leftarrow
\operatorname{Transform}^{-1}\bigl((\mathbf{p}^{\mathrm{ter}},\mathbf{R}^{\mathrm{ter}}),(\mathbf{o},\mathbf{d})\bigr)$\\
6: & \quad Terrain hit distance:
$t^{\mathrm{ter}} \leftarrow
\mathrm{RayQuery}(\mathcal{M}^{\mathrm{ter}},\mathbf{o}^{\mathrm{ter}},\mathbf{d}^{\mathrm{ter}})$\\[0.5mm]

7: & \quad Initialize nearest self-occlusion distance:
$t^{\mathrm{occ}} \leftarrow +\infty$\\
8: & \quad \textbf{for} $j = 1$ \textbf{to} $N^{\ell}$ \textbf{do}\\
9: & \qquad Link frame ray:
$(\mathbf{o}^{\ell}_j,\mathbf{d}^{\ell}_j) \leftarrow
\mathrm{Transform}^{-1}\bigl((\mathbf{p}^{\ell}_{j,e},\mathbf{R}^{\ell}_{j,e}),(\mathbf{o},\mathbf{d})\bigr)$\\
10: & \qquad Link hit distance:
$t^{\ell}_j \leftarrow
\operatorname{RayQuery}(\mathcal{M}^{\ell}_{j},\mathbf{o}^{\ell}_j,\mathbf{d}^{\ell}_j)$\\
11: & \qquad $t^{\mathrm{occ}} \leftarrow \min(t^{\mathrm{occ}},t^{\ell}_j)$\\
12: & \qquad \textbf{if} $t^{\mathrm{occ}} \le d_{\min}$ \textbf{then break}\\
13: & \quad \textbf{end for}\\[0.5mm]

14: & \quad Convert ray distance to camera $z$-depth:
$c \leftarrow (1+t_x^2+t_y^2)^{-1/2}$\\
15: & \quad $\mathbf{D}[e,h,w] \leftarrow
\operatorname{clamp}\bigl(c \cdot \min(t^{\mathrm{ter}},t^{\mathrm{occ}}),d_{\min},d_{\max}\bigr)$\\
16: & \textbf{end for}\\
17: & \textbf{return} $\mathbf{D}$\\
\end{tabular}\\

\specialrule{1.2pt}{1.0pt}{0pt}
\end{tabular}
\end{table}


\newpage
\section{Real-World Deployment Details}
\label{AppendixD}

\setlength{\columnsep}{6pt}
\begin{wraptable}{r}{0.36\textwidth}
\vspace{-10pt}
\begin{minipage}{0.36\textwidth}
\centering
\renewcommand{\arraystretch}{1.1}
\setlength{\tabcolsep}{1.8pt}
\footnotesize
\begin{tabular}{cccc}
\specialrule{1.2pt}{0pt}{1pt}
\textbf{Joint Name} & \textbf{$K_p$} & \textbf{$K_d$} & \textbf{Action Scale} \\
\specialrule{0.8pt}{1pt}{0pt}
Hip pitch & 120 & 4 & 0.25 \\
Hip yaw & 100 & 2 & 0.25 \\
Hip roll & 100 & 2 & 0.25 \\
Knee & 120 & 4 & 0.25 \\
Ankle pitch & 40 & 2.0 & 0.25 \\
Ankle roll & 20 & 1.0 & 0.25 \\
Waist yaw & 100 & 3 & 0.2 \\
Shoulder pitch & 30 & 1 & 0.2 \\
Shoulder roll & 30 & 1 & 0.2 \\
Shoulder yaw & 30 & 1 & 0.2 \\
Elbow & 30 & 1 & 0.2 \\
\specialrule{1.2pt}{0pt}{0pt}
\end{tabular}
\captionsetup{
    width=\linewidth,
    singlelinecheck=false,
    skip=3pt,
    font=footnotesize
}
\caption{Joint-wise deployment gains and action scales.}
\label{table14}
\end{minipage}
\vspace{-16pt}
\end{wraptable}

We deploy our policy on the AgiBot X2 humanoid robot, which has 29 joints, stands 1.31~m tall, and weighs approximately 39~kg. A forward-facing Intel RealSense D435i depth camera is mounted at the waist and pitched downward by $50^\circ$ relative to the horizontal plane. The camera provides horizontal and vertical fields of view of $87^\circ$ and $58^\circ$, respectively. Raw $640\times360$ depth images are filtered, downsampled, and hole-filled using the Intel RealSense SDK, and then cropped to a resolution of $36\times36$ before being fed into the policy. Depth observations are streamed to the controller at 60~Hz.

For onboard execution, we establish a 1~kHz communication pipeline between the motion control unit (RK3588) and the inference unit (Jetson AGX Orin) via lightweight ROS~2 topic-based messaging. Policy inference runs with ONNX Runtime at 50~Hz and outputs target joint positions for all 21 policy-controlled joints. During deployment, each actuated joint is assigned an individual proportional gain $K_p$, derivative gain $K_d$, and action scale, as summarized in Table~\ref{table14}.


\section{Cross-Platform Validation}
\label{AppendixE}

\setlength{\columnsep}{6pt}
\begin{wraptable}{r}{0.36\textwidth}
\vspace{-12pt}
\begin{minipage}{0.36\textwidth}
\centering
\renewcommand{\arraystretch}{1.25}
\setlength{\tabcolsep}{1.8pt}
\footnotesize
\begin{tabular}{ccc}
\specialrule{1.2pt}{0pt}{1pt}
\textbf{Terrain} & \textbf{Settings} & \textbf{Success Rate} \\
\specialrule{0.8pt}{1pt}{0pt}
Stairs Up    & 15 / 40 cm     & 100.0\% {\tiny(20/20)} \\
Stairs Down  & 15 / 40 cm     & 100.0\% {\tiny(20/20)}\\
Gap          & 80 cm          & 100.0\% {\tiny(20/20)}\\
Platform     & 45 cm          & 95.0\% {\tiny(19/20)}\\
\specialrule{1.2pt}{0pt}{0pt}
\end{tabular}
\captionsetup{
    width=\linewidth,
    singlelinecheck=false,
    skip=4pt,
    font=footnotesize
}
\caption{Cross-platform real-world lab-level traversal performance on DEEP Robotics DR02. Success rates are computed over 20 trials per terrain.}
\label{table15}
\end{minipage}
\vspace{-6pt}
\end{wraptable}

To evaluate the generalizability of SSR across different robotic platforms, we conduct a cross-platform validation on the full-size DEEP Robotics DR02 humanoid. We retain the same training and deployment pipeline used for the primary platform, AgiBot X2, without introducing any platform-specific algorithmic changes. Compared with AgiBot X2, DR02 differs substantially in physical scale and dynamics, standing at roughly 1.8~m tall and weighing around 70~kg. Despite these embodiment differences, SSR can be instantiated effectively on the new platform. As reported in Table~\ref{table15} and illustrated in Fig.~\ref{fig11}, the policy achieves high success rates across representative lab-level terrains, while maintaining precise foothold placement and coordinated, natural whole-body motion. These results indicate that SSR is not tightly coupled to a specific hardware configuration and exhibits promising cross-platform applicability across distinct humanoid embodiments.

\begin{figure}[H]
  \centering
  \vspace{3pt}
  \includegraphics[width=0.7\textwidth]{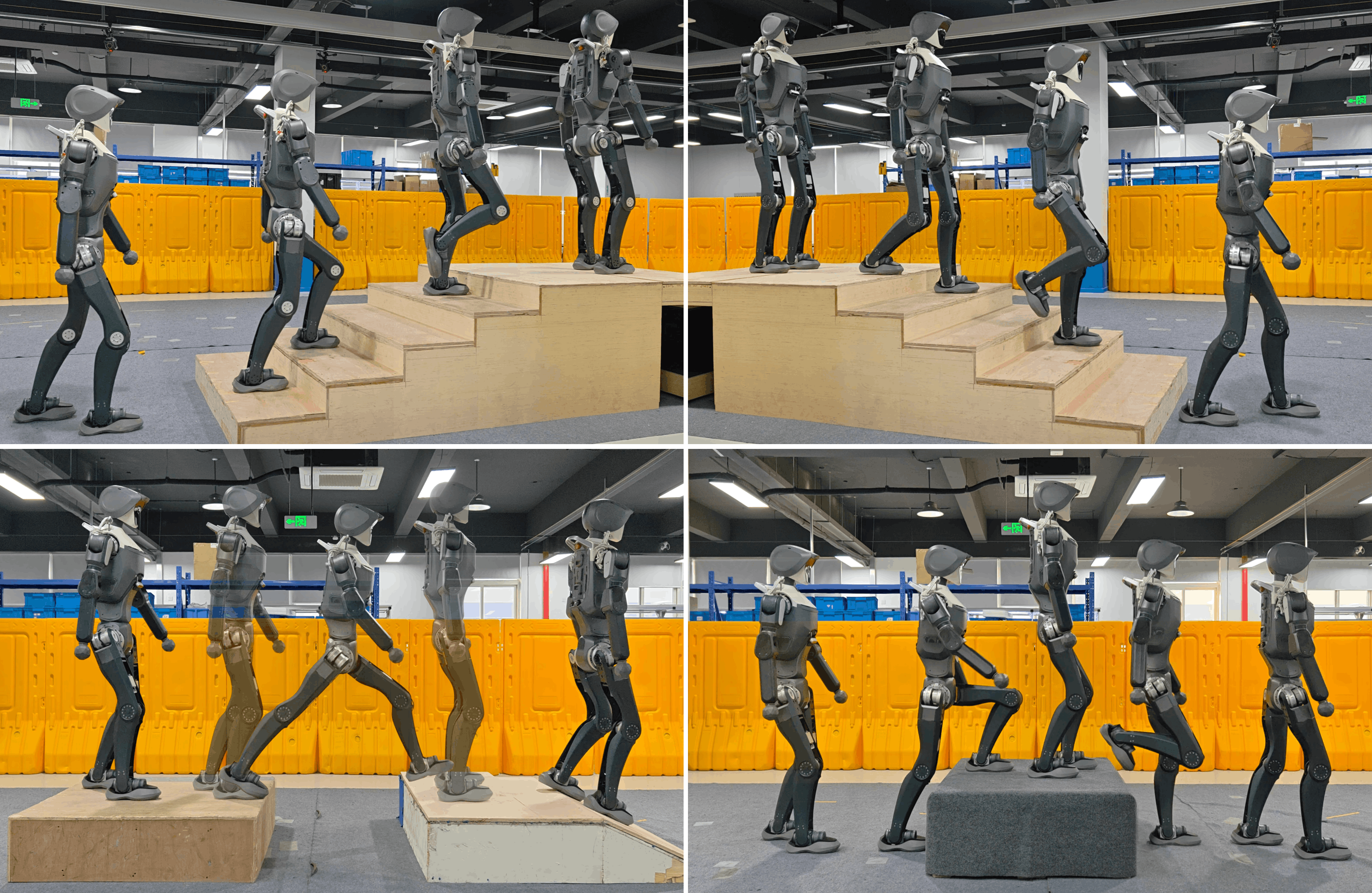}
  \captionsetup{
    width=\linewidth,
    singlelinecheck=false,
    font=footnotesize,
    skip=4pt
  }
  \caption{Key frames of cross-platform laboratory deployment on the full-size DEEP Robotics DR02 humanoid. The SSR policy successfully traverses stairs, an 80~cm gap, and a 45~cm platform, demonstrating its cross-platform generalization.}
  \label{fig11}
\end{figure}

\end{document}